\documentclass{article}

\usepackage{microtype}
\usepackage{graphicx}
\usepackage{subcaption}
\usepackage{booktabs} 
\usepackage{mathtools}
\usepackage{csquotes}
\usepackage{enumitem}
\usepackage{tcolorbox}
\usepackage{pifont}
\usepackage{tabularx}
\usepackage{multirow}
\usepackage{graphicx}
\usepackage{threeparttable}

\usepackage{hyperref}

\usepackage[preprint]{icml2026}

\usepackage{amsmath}
\usepackage{amssymb}
\usepackage{mathtools}
\usepackage{amsthm}
\usepackage{bm}
\usepackage{amsfonts}

\usepackage{tikz}
\usetikzlibrary{shapes, arrows.meta, positioning, calc, backgrounds, shadows}
\usepackage[capitalize,noabbrev]{cleveref}

\theoremstyle{plain}
\newtheorem{theorem}{Theorem}[section]

\newtheorem{lemma}[theorem]{Lemma}
\newtheorem{corollary}[theorem]{Corollary}
\theoremstyle{definition}
\newtheorem{definition}[theorem]{Definition}
\newtheorem{assumption}[theorem]{Assumption}
\theoremstyle{remark}

\usepackage[textsize=tiny]{todonotes}

\icmltitlerunning{Rethinking LoRA for Data Heterogeneous Federated Learning: Subspace and State Alignment}

\def\eqref#1{equation~\ref{#1}}

\def\1{\bm{1}}

\def\mA{{\bm{A}}}
\def\mB{{\bm{B}}}

\DeclareMathAlphabet{\mathsfit}{\encodingdefault}{\sfdefault}{m}{sl}
\SetMathAlphabet{\mathsfit}{bold}{\encodingdefault}{\sfdefault}{bx}{n}

\newcommand{\methodname}{\texttt{FedGaLore}}

\begin{document}

\twocolumn[
  \icmltitle{Rethinking LoRA for Data Heterogeneous Federated Learning: \\Subspace and State Alignment}



  \icmlsetsymbol{equal}{*}

  \begin{icmlauthorlist}
    \icmlauthor{Hongyi Peng}{NTU}
    \icmlauthor{Han Yu}{NTU}
    \icmlauthor{Xiaoxiao Li}{UBC,Vector}
    \icmlauthor{Qiang Yang}{PolyU}
  \end{icmlauthorlist}

\icmlaffiliation{NTU}{College of Computing and Data Science, Nanyang Technological University, Singapore.}
\icmlaffiliation{UBC}{Department of Electrical and Computer Engineering, The University of British Columbia, Vancouver, BC, Canada.}
\icmlaffiliation{Vector}{Vector Institute, Canada}
\icmlaffiliation{PolyU}{The Hong Kong Polytechnic University}

\icmlcorrespondingauthor{Han Yu}{ han.yu@ntu.edu.sg}

  \icmlkeywords{Federated Learning,  Data Heterogeneity, Para}

  \vskip 0.3in
]



\printAffiliationsAndNotice{}  

\begin{abstract}
Low-Rank Adaptation (LoRA) is widely used for federated fine-tuning. Yet under non-IID settings, it can substantially underperform full-parameter fine-tuning. Through with-high-probability robustness analysis, we uncover that this gap can be attributed to two coupled mismatches: (i) \emph{update-space mismatch}, where clients optimize in a low-rank subspace but aggregation occurs in the full space; and (ii) \emph{optimizer-state mismatch}, where unsynchronized adaptive states amplify drift across rounds. We propose \methodname{}, which combines client-side GaLore-style gradient-subspace optimization with server-side drift-robust synchronization of projected second-moment states via spectral shared-signal extraction, to address this challenge. Across NLU, vision, and NLG benchmarks, \methodname{} improves robustness and accuracy over state-of-the-art federated LoRA baselines in non-IID settings.
\end{abstract}

\section{Introduction}\label{sec:intro}
Foundation models (FMs) have reshaped machine learning via the pretrain--finetune paradigm \citep{bommasani2021opportunities,brown2020language,llama,kirillov2023segment}. However, their scale \citep{kaplan2020scaling} makes full fine-tuning (FFT) prohibitively expensive. This has motivated parameter-efficient fine-tuning (PEFT), where Low-Rank Adaptation (LoRA) \citep{hu2021lora} has become the de facto standard. LoRA freezes pretrained weights and learns low-rank adapter factors $\mA$ and $\mB$, thereby greatly reducing trainable parameters (and optimizer-state memory) while retaining strong centralized performance. These benefits have spurred federated LoRA variants. However, federated learning (FL) is sensitive to statistical heterogeneity: non-IID clients might produce biased and divergent local updates. As a result, while LoRA can match the performance of FFT in centralized training \citep{hu2021lora}, it often degrades substantially under non-IID federated settings \citep{babakniya2023slora, wu2025survey_peft_fl}.

\begin{figure}[t]
    \centering
    \begin{subfigure}[b]{0.49\linewidth}
        \centering
        \includegraphics[width=\textwidth]{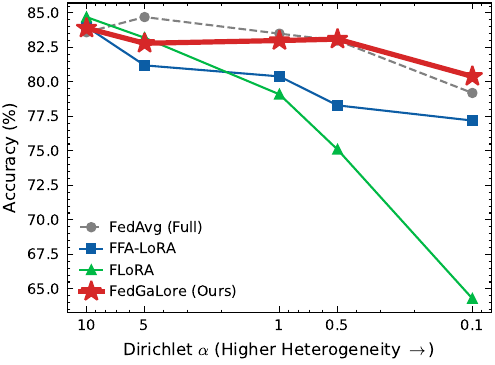}
    \end{subfigure}
    \hfill
    \begin{subfigure}[b]{0.49\linewidth}
        \centering
        \includegraphics[width=\textwidth]{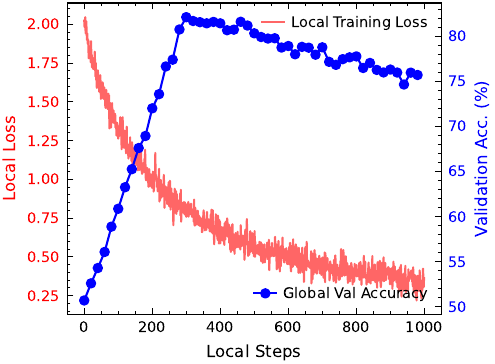}
    \end{subfigure}
\caption{\textbf{Left:} Under increasing data heterogeneity (smaller Dirichlet $\alpha$), representative federated LoRA baselines degrade sharply, while \methodname{} remains stable and approaches full fine-tuning performance. 
\textbf{Right:} With local adaptive optimizers, client training loss can decrease while global validation performance stagnates or degrades, indicating optimizer-state mismatch.}
 \label{fig:fedlora_non_iid}
 \vspace{-8pt}
\end{figure}

Figure~\ref{fig:fedlora_non_iid} shows that federated LoRA fine-tuning underperforms full fine-tuning under non-IID FL settings. The gap widens as data heterogeneity increases. Prior work mitigates this by further restricting the update space (e.g., FFA-LoRA shares a factor across clients), which can improve robustness under severe data heterogeneity, with reduced IID performance \citep{sun2024improving}. We also observe a local--global mismatch with adaptive optimizers (e.g., Adam/AdamW): local training loss decreases while global validation performance improves little or even degrades. These observations point to two coupled mechanisms: (i) misalignment of client updates in the parameter space, and (ii) drift of adaptive optimizer states across clients and rounds.

To formalize this and complement standard expectation-based analysis, we develop a with-high-probability (W.H.P.) robustness framework that controls the probability of rare but catastrophic deviations under heterogeneity. We view federated fine-tuning as a perturbed implementation of a centralized optimizer, and decompose each round into client training ($\mathcal{T}$), server aggregation ($\mathcal{A}$) and state synchronization ($\mathcal{S}$). This lens isolates two failure modes:
\textbf{(1) Update-space mismatch:} LoRA constrains each client update to a low-rank subspace, while $\mathcal{A}$ aggregates in the full parameter space.
\textbf{(2) Optimizer-state mismatch:} inconsistencies in adaptive states (arising from imperfect $\mathcal{S}$ and amplified by $\mathcal{T}$) propagate through local dynamics and amplify client drift.
Together, these effects explain the brittleness of federated LoRA under data heterogeneity.

To address these limitations, we propose \methodname{}. It couples \emph{gradient-subspace client optimization} with \emph{drift-robust state synchronization}. On the client side, we replace LoRA’s fixed parameter-subspace updates with GaLore-style gradient-subspace optimization: each client computes gradients on the adapted modules and updates in an adaptively estimated rank-$r$ subspace, preserving LoRA-like parameter efficiency while improving aggregation robustness. Clients use a GaLore-based AdamW variant to maintain local adaptivity.
On the server side, we mitigate optimizer-state mismatch with minimal communication by synchronizing only the \emph{projected second-moment state} that defines the preconditioner. Specifically, clients upload low-rank projected second moments, and the server applies an AJIVE filter \citep{feng2018ajive} to extract the shared component across clients and broadcasts it as the next-round initialization, stabilizing training under non-IID data.
Extensive experimental evaluation has been conducted on the NLU, vision and NLG benchmarks. The results show that \methodname{} significantly improves robustness to data heterogeneity and stability over state-of-the-art federated LoRA baselines.


\section{Related Works}
LoRA \citep{hu2021lora} has inspired many extensions, including rank-adaptive/structured adapters \citep{zhang2023adalora,kopiczko2023vera,liu2024dora} and improved initialization \citep{lialin2023relora,meng2024pissa}. More recently, \textbf{GaLore} \citep{zhao2024galore} shifts from parameter-space adapters to \emph{gradient-subspace optimization}: it projects gradients into an evolving low-rank subspace, preserving access to full-gradient information while keeping updates low-rank. This has motivated follow-up work on theory and variants \citep{he2024subspace_improve_galore_guarantee,pan2025unbiased_galore,su2025galore2}, and comparative analyses suggesting more favorable optimization behavior than fixed parameter constraints in LoRA \citep{hao2024flora,liu2025optimization}.

FL must cope with client drift induced by statistical heterogeneity. FedAvg \citep{mcmahan2017communication} and drift-control methods such as SCAFFOLD \citep{karimireddy2020scaffold} and FedDyn \citep{AcarZNMWS21feddyn} primarily analyze local SGD, while FedOpt introduces server-side adaptivity \citep{reddi2020adaptive_fedopt}. More recent work studies \emph{local} adaptive optimizers \citep{li2023fedda,tang2024fedlion}, but synchronizing their internal states remains nontrivial \citep{douillard2023diloco}. On the theory side, convergence analyses increasingly cover broader heterogeneity \citep{cheng2023momentum_convergence}, yet most results are in expectation and can obscure rare but destabilizing events. We adopt a with-high-probability (W.H.P.) lens \citep{cutkosky2021highprob_first} to explicitly control such outliers, which is particularly relevant for robust federated fine-tuning.

Federated LoRA is \emph{not} a drop-in replacement for FFT-based FL: heterogeneity remedies developed for full fine-tuning do not necessarily transfer because LoRA changes the update space and effective local capacity. Empirically, combining LoRA with standard drift-control can be inconsistent \citep{ye2024openfedllm}. Moreover, many federated LoRA methods adopt local adaptive optimizers (Adam/AdamW) for faster client progress, but rarely account for optimizer-state drift across clients and rounds. Table~\ref{tab:existing_method} highlights that prior work varies in optimizer choice and whether aggregation is performed on low-rank factors or lifted full-space updates, yet few explicitly synchronize adaptive states.

\begin{table}[!h]
\centering
\caption{\textbf{Representative federated LoRA methods.} Local optimizer, aggregation, and whether optimizer states are synchronized.}
\label{tab:existing_method}
\footnotesize
\setlength{\tabcolsep}{4pt}
\renewcommand{\arraystretch}{1.08}
\begin{minipage}{\columnwidth}
\centering
\begin{tabular*}{\columnwidth}{@{\extracolsep{\fill}}l l l c@{}}
\toprule
\textbf{Method} & \textbf{Opt.} & \textbf{Agg.} & \textbf{Sync} \\
\midrule
FedIT \citep{zhang2024towards}       & Adam  & factors      & No \\
FFA-LoRA \citep{sun2024improving}   & SGD   & factors      & No \\
LoRA-Fair \citep{bian2024lora-fair} & SGD   & factors      & No \\
FLoRA \citep{wang2024flora}         & AdamW & lift $\Delta W$ & No \\
FR-LoRA \citep{yanfrlora}           & AdamW & lift $\Delta W$ & No \\
\bottomrule
\end{tabular*}
\end{minipage}
\end{table}

\section{Preliminaries}
\paragraph{Notation.}
\DeclarePairedDelimiter{\norm}{\lVert}{\rVert}
\newcommand{\mat}[1]{\bm{#1}}
\NewDocumentCommand{\client}{m O{i} O{k} O{t}}{{#1}_{#4}^{#2,#3}}
\newcommand{\grad}{\mat{g}}
\newcommand{\drift}{\mat{c}}
\newcommand{\param}{\mat{\theta}}
\newcommand{\noise}{\mat{\xi}}
\newcommand{\data}{\mathcal{D}}
\newcommand{\loss}{\mathcal{L}}
\newcommand{\prob}{\mathbb{P}}
\DeclarePairedDelimiter{\expect}{\mathbb{E}[}{]}

\newcommand{\state}{\mat{S}}

\newcommand{\Cdrift}{\client{\drift}}
\newcommand{\Cgrad}{\client{\grad}}
\newcommand{\Cparam}{\client{\param}}
\newcommand{\Cnoise}{\client{\noise}}
\newcommand{\Cdata}{\data_{i}}
\newcommand{\Closs}{\loss_{i}}
\newcommand{\Cstate}{\client{\state}}
For $a\in\mathbb{R}^d$, $\|a\|$ denotes the Euclidean norm and $\|a\|_{\infty}\coloneqq \max_i |a_i|$ the infinity norm. For vectors, $\odot$ and $/$ denote element-wise multiplication and division; $a^2$, $\sqrt{a}$, and $|a|$ are applied coordinate-wise. Matrices are denoted by $\mat{A}\in\mathbb{R}^{d_1\times d_2}$; $\|\mat{A}\|$ is the spectral norm and $\|\mat{A}\|_F$ the Frobenius norm. 

\paragraph{Problem Setup}
We consider an FL system with $M$ clients, where client $i$ has a local dataset $\mathcal{D}_i=\{(x_n,y_n)\}_{n=1}^{N_i}$ and objective $F_i(\theta)$ (population or empirical risk). Client data may be non-IID: each client draws samples from a potentially different distribution $\mathbb{P}_i$, inducing biased and divergent local updates under heterogeneous training.
Let $\theta\in\Theta\subseteq\mathbb{R}^d$ denote model parameters and define the global objective
$f(\theta)\coloneqq \sum_{i=1}^M p_i F_i(\theta)$,
where $p_i\ge 0$, $\sum_{i=1}^M p_i=1$, and $p_i$ is typically proportional to data size or client sampling probability.\footnote{We assume full participation unless stated otherwise.} 
Let $g^{i,k}_t$ denote the stochastic gradient on client $i$ at round $k$ and local step $t$ (an unbiased estimator of $\nabla F_i(\theta^{i,k}_t)$). We decompose:
\begin{equation}\label{eq:decomposition}
g^{i,k}_t \;=\; \nabla f(\theta^{i,k}_t) \;+\; c^{i,k}_t \;+\; \xi^{i,k}_t.
\end{equation}
$c^{i,k}_t \coloneqq \nabla F_i(\theta^{i,k}_t)-\nabla f(\theta^{i,k}_t)$ captures client drift. $\xi^{i,k}_t \coloneqq g^{i,k}_t-\nabla F_i(\theta^{i,k}_t)$ is mini-batch noise. For any fixed $\theta$, $\sum_{i=1}^M p_i\big(\nabla F_i(\theta)-\nabla f(\theta)\big)=0$ by definition. Under local training $c^{i,k}_t$ is evaluated along different client trajectories.

\begin{definition}[Local Training Operator]
The local training operator $\mathcal{T}_i(\bar{\theta}_k, S^{i,k}_{0}; T)$ runs $T$ local steps on client $i$ starting from global model $\bar{\theta}_k$ and initial optimizer state $S^{i,k}_{0}$ (e.g., SGD, Adam \citep{kingma2014adam}, AdamW \citep{loshchilov2017adamw}). It outputs
\begin{equation}
(\theta^{i,k}_{T}, S^{i,k}_{T}) \;=\; \mathcal{T}_i(\bar{\theta}_k, S^{i,k}_{0}; T).
\end{equation}
\end{definition}

\begin{definition}[Server Aggregation Operator]
Given participating clients $\mathcal{P}_k\subseteq[M]$ and their terminal models $\{\theta^{i,k}_{T}\}_{i\in\mathcal{P}_k}$, the aggregation operator $\mathcal{A}$ produces
\begin{equation}
\bar{\theta}_{k+1} \;=\; \mathcal{A}\big(\{\theta^{i,k}_{T}\}_{i\in\mathcal{P}_k}\big),
\end{equation}
with a canonical choice being FedAvg:
$\bar{\theta}_{k+1}=\sum_{i\in\mathcal{P}_k}\tilde p_i\,\theta^{i,k}_{T}$, where
$\tilde p_i \coloneqq \frac{p_i}{\sum_{j\in\mathcal{P}_k}p_j}$.
\end{definition}

\begin{definition}[State Synchronization Protocol]
The synchronization operator $\mathcal{S}$ specifies how optimizer states are shared or aggregated across rounds. Given terminal client states $\{S^{i,k}_{T}\}_{i\in\mathcal{P}_k}$ (and optionally server state $\bar S_k$):
\begin{align}
\bar S_{k+1} \;&=\; \mathcal{S}\big(\{S^{i,k}_{T}\}_{i\in\mathcal{P}_k}, \bar S_k\big),\\
S^{i,k+1}_0 &\leftarrow \mathrm{InitState}(\bar S_{k+1}).
\end{align}
Common choices: 1) \emph{none} (clients reinitialize each round), and 2) \emph{server-only} (the server maintains adaptive states).
\end{definition}

This modular view covers many FL algorithms. FedAvg \citep{mcmahan2017communication} uses SGD for $\mathcal{T}$, FedAvg for $\mathcal{A}$, and \emph{none} for $\mathcal{S}$. SCAFFOLD \citep{karimireddy2020scaffold} introduces control variates in $\mathcal{T}$ that must be synchronized by $\mathcal{S}$. FedOpt/FedAdam \citep{reddi2020adaptive_fedopt} typically keeps $\mathcal{T}$ stateless while implementing adaptivity at the server through $\mathcal{A}$ together with server-maintained states in $\mathcal{S}$.

\paragraph{Assumptions}\label{sec:assumptions}
We analyze federated fine-tuning over a region $Q \subseteq \Theta$ that contains the iterates (i.e., $\theta^{i,k}_t \in Q$ for all $i,k,t$). Unless stated otherwise, all assumptions below hold on $Q$. This restricted-region viewpoint is standard for modern over-parameterized networks: although the global landscape is non-convex, the optimizer often remains in a well-behaved region where a local PL condition holds (often termed PL$^*$) \citep{liu2022overparameterized_pl}.

\begin{assumption}[Lower-boundedness]\label{ass:lowerbounded}
The objective is lower bounded on $Q$:
$f^* \coloneqq \inf_{\theta \in Q} f(\theta) > -\infty$.
\end{assumption}

\begin{assumption}[Smoothness]\label{ass:smoothness}
Each client objective $F_i$ is $L$-smooth on $Q$:
\begin{equation}
\|\nabla F_i(x)-\nabla F_i(y)\| \le L\|x-y\|,\quad \forall x,y\in Q,\ \forall i\in[M].
\end{equation}
In particular, $f=\sum_i p_i F_i$ is also $L$-smooth on $Q$.
\end{assumption}

\begin{assumption}[Polyak--\L{}ojasiewicz (PL)]\label{ass:pl-condition}
There exists $\mu>0$ such that for all $\theta\in Q$,
\begin{equation}
\|\nabla f(\theta)\|^2 \ge 2\mu\big(f(\theta)-f^*\big).
\end{equation}
\end{assumption}

\begin{assumption}[Bounded heterogeneity]\label{ass:bounded_heterogeneity}
There exist constants $B\ge 1$ and $H\ge 0$ such that for all $\theta\in Q$,
\begin{equation}
\sum_{i=1}^M p_i \|\nabla F_i(\theta)\|^2 \le H^2 + B^2\|\nabla f(\theta)\|^2.
\end{equation}
\end{assumption}

\begin{assumption}[Bounded local gradients]\label{ass:bounded_grad}
There exists $G\ge 0$ such that for all $i\in[M]$ and $\theta\in Q$, $\|\nabla F_i(\theta)\| \le G$.
\end{assumption}

Assumption~\ref{ass:bounded_heterogeneity} is standard in FL for controlling non-IID drift \citep{karimireddy2020scaffold, wang2021field_guide}. 
Assumption~\ref{ass:bounded_grad} is a technical condition for bounded increments in high-probability analyses and is satisfied when using norm-wise gradient clipping with threshold $G$ \citep{goodfellow2016deep, reddi2020adaptive_fedopt, wu2023FAFED, wang2022communication}.

\begin{assumption}[Sub-Gaussian mini-batch noise]\label{ass:subgaussian}
For all $(i,k,t)$, the noise $\xi_t^{i,k}$ is conditionally mean-zero and $\sigma$-sub-Gaussian given $\mathcal{F}_{k,t-1}$ (Definition~\ref{def:subg}).
\end{assumption}
This assumption is standard for W.H.P.\ bounds via martingale concentration \citep{scaman2020robustness_sub_gaussian}. It is also common in stochastic-dynamics views of optimization \citep{welling2011bayesian_langevin_dynamics, raginsky2017nonconvex_langevin_dynamics}. Our results can be extended to weaker moment assumptions following, e.g., \citet{cheng2024convergence_whp}.

\section{A W.H.P. Lens for Federated Learning}
\paragraph{Why W.H.P.\ robustness?}
Expectation-based analyses describe \emph{average} behavior but can obscure instability under severe heterogeneity. In federated fine-tuning, client drift and mini-batch noise can interact to produce rare but catastrophic trajectories that leave a stable basin and derail aggregation. With-high-probability (W.H.P.) guarantees address this by controlling the entire optimization path with confidence $1-\delta$ over the stochasticity. For instance, if a method converges for $90\%$ of realizations but diverges for the remaining $10\%$, an in-expectation guarantee may still look favorable, whereas a W.H.P.\ statement cannot hold at confidence levels above $0.9$.

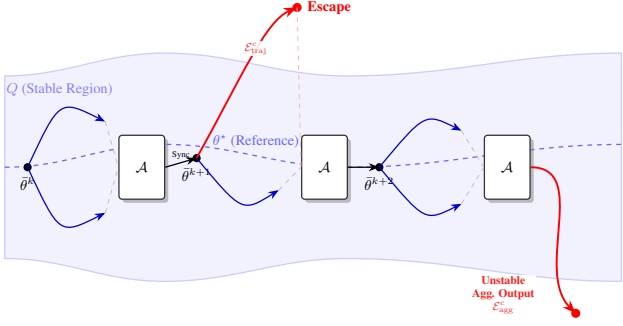
\begin{figure}[t]
\centering
\resizebox{\linewidth}{!}{%
\begin{tikzpicture}[
    >=Stealth,
    font=\small,
    region_q/.style={fill=blue!5, draw=blue!20, thick},
    ref_traj/.style={blue!60, dashed, thick},
    local_traj/.style={blue!80!black, thick, ->, smooth, tension=0.6},
    escape_traj/.style={red, very thick, ->, smooth, tension=0.6},
    agg_box/.style={
        draw=black!80, 
        thick, 
        fill=white, 
        rectangle, 
        minimum width=1.0cm, 
        minimum height=1.4cm, 
        rounded corners=2pt,
        drop shadow
    },
    init_point/.style={circle, fill=black, inner sep=1.8pt},
    bad_point/.style={circle, fill=red, inner sep=2pt},
    label_box/.style={fill=white, inner sep=1pt, opacity=0.9, text=black, scale=0.8, align=center}
]

    \filldraw[region_q] 
        (0, 2.0) to[out=0, in=180] (3.5, 2.5) to[out=0, in=180] (7.5, 2.0) to[out=0, in=180] (13.5, 2.5) -- 
        (13.5, -2.5) to[out=180, in=0] (7.5, -2.0) to[out=180, in=0] (3.5, -2.5) to[out=180, in=0] (0, -2.0) -- cycle;
        
    \node[blue!50] at (1.2, 1.7) {$Q$ (Stable Region)};

    \draw[ref_traj] (0, 0) to[out=0, in=180] (3.5, 0.5) to[out=0, in=180] (7.5, 0) to[out=0, in=180] (13.5, 0.5);
    \node[blue!60, above] at (5.5, 0.3) {$\theta^\star$ (Reference)};

    \coordinate (start_k) at (0.5, 0);
    \node[init_point, label=below:{$\bar{\theta}^k$}] at (start_k) {};

    \draw[local_traj] (start_k) .. controls (1.2, 1.5) .. (2.2, 1.0) coordinate (end_k_1);
    \draw[local_traj] (start_k) .. controls (1.2, -1.5) .. (2.2, -1.0) coordinate (end_k_2);
    
    \node[agg_box] (box_k) at (3.0, 0) {$\mathcal{A}$};
    \draw[gray!50, dashed] (end_k_1) -- (box_k.west);
    \draw[gray!50, dashed] (end_k_2) -- (box_k.west);

    \coordinate (start_k1) at (4.2, 0.2);
    \draw[->, thick] (box_k.east) -- (start_k1) node[midway, above, scale=0.6] {Sync};
    \node[init_point, label=below:{$\bar{\theta}^{k+1}$}] at (start_k1) {};

    \draw[local_traj] (start_k1) .. controls (5.2, -1.0) .. (6.0, -0.5) coordinate (end_k1_good);

    \draw[escape_traj] (start_k1) .. controls (5.2, 2.0) and (5.7, 3.2) .. (6.4, 3.5) coordinate (end_k1_bad);
    \node[bad_point, label=right:{\color{red}\textbf{Escape}}] at (end_k1_bad) {};
    \node[label_box, text=red] at (5.5, 2.6) {$\mathcal{E}_{\mathrm{traj}}^c$};

    \node[agg_box] (box_k1) at (7.0, 0) {$\mathcal{A}$};
    \draw[gray!50, dashed] (end_k1_good) -- (box_k1.west);
    \draw[red!30, dashed] (end_k1_bad) -- (box_k1.west);

    \coordinate (start_k2) at (8.2, 0);
    \draw[->, thick] (box_k1.east) -- (start_k2);
    \node[init_point, label=below:{$\bar{\theta}^{k+2}$}] at (start_k2) {};

    \draw[local_traj] (start_k2) .. controls (9.2, 1.2) .. (10.0, 0.8) coordinate (end_k2_1);
    \draw[local_traj] (start_k2) .. controls (9.2, -1.2) .. (10.0, -0.8) coordinate (end_k2_2);

    \node[agg_box] (box_k2) at (11.0, 0) {$\mathcal{A}$};
    \draw[gray!50, dashed] (end_k2_1) -- (box_k2.west);
    \draw[gray!50, dashed] (end_k2_2) -- (box_k2.west);

    \coordinate (bad_out) at (12.5, -3.2); 
    \draw[->, red, very thick] (box_k2.east) to[out=0, in=135] (bad_out);
    \node[bad_point] at (bad_out) {};
    
    \node[label_box, text=red, anchor=west] at (10.2, -2.8) {\textbf{Unstable}\\\textbf{Agg. Output}\\$\mathcal{E}_{\mathrm{agg}}^c$};

\end{tikzpicture}%
} 

\caption{\textbf{Trajectory robustness and failure modes.}
The stable region $Q$ (blue tube) surrounds a centralized reference solution $\theta^\star$ (dashed).
\textbf{Left (round $k$):} Stable operation; all client trajectories remain in $Q$.
\textbf{Middle (round $k{+}1$):} \emph{Local escape} ($\mathcal{E}_{\mathrm{traj}}^{c}$, red), where client drift causes a trajectory to exit $Q$.
\textbf{Right (round $k{+}2$):} \emph{Aggregation instability} ($\mathcal{E}_{\mathrm{agg}}^{c}$), where the aggregated model leaves $Q$ even though all client endpoints lie in $Q$.}
\label{fig:whp_horizontal_fitted}
\end{figure}

Inspired by recent W.H.P.\ analyses of local adaptive optimization \citep{cheng2024convergence_whp}, we study FL through a trajectory-robustness lens. We compare federated iterates $\{\bar\theta^k\}_{k\ge 0}$ to a centralized reference trajectory $\{\theta_t^\star\}$ within a \emph{stable region} $Q$ where $f$ is well behaved (Assumptions~\ref{ass:smoothness}--\ref{ass:pl-condition}). For round $k$, let $\Theta^{i,k}$ denote client $i$'s local trajectory. As illustrated in Fig.~\ref{fig:whp_horizontal_fitted}, we decompose stability into two coupled success events:
\begin{itemize}[noitemsep, topsep=0pt, leftmargin=*]
\item \textbf{Local containment $\mathcal{E}_{\mathrm{traj}}(K)$:} all client trajectories stay in $Q$ for rounds $k\le K$.
\item \textbf{Aggregation stability $\mathcal{E}_{\mathrm{agg}}(K)$:} the aggregated iterate $\bar\theta^{k+1}$ stays in $Q$ for rounds $k\le K$.
\end{itemize}
Controlling the failure probabilities of $\mathcal{E}_{\mathrm{traj}}(K)$ and $\mathcal{E}_{\mathrm{agg}}(K)$ implies, via a union bound and induction, that the global trajectory remains in $Q$ with probability at least $1-\delta$, enabling standard contraction arguments throughout training. This modular roadmap isolates failure modes: $\mathcal{E}_{\mathrm{traj}}$ reflects drift and optimizer-state mismatch, while $\mathcal{E}_{\mathrm{agg}}$ captures aggregation instability.

\subsection{Update-space Mismatch}\label{sec:update_space_mismatch}
We analyze aggregation stability, which governs $\mathcal{E}_{\mathrm{agg}}(K)$: even if all clients remain in a stable region, aggregation might push the global iterate outside it. Given round $k$ and assume $\theta^{i,k}_T\in Q$ for all participating clients, the question is whether the server update $\bar\theta^{k+1}=\mathcal{A}(\{\theta^{i,k}_T\})$ still lies in $Q$. Under convexity this is immediate: if $Q$ is convex and $\mathcal{A}$ is a convex combination, aggregation cannot leave $Q$.
\begin{lemma}[Aggregation preserves convex regions]\label{lem:convex_Q}
Let $Q$ be convex. If $\theta_i\in Q$ for all $i\in[M]$ and $\bar\theta=\sum_{i=1}^M \tilde p_i \theta_i$ with $\tilde p_i\ge 0$ and $\sum_i \tilde p_i=1$, then $\bar\theta\in Q$.
\end{lemma}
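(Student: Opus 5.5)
I will argue by induction on the number $M$ of points, using only the definition of convexity: for any $x,y\in Q$ and $\lambda\in[0,1]$, $\lambda x+(1-\lambda)y\in Q$.

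\textbf{Base case.} For $M=1$ we must have $\tilde p_1=1$, so $\bar\theta=\theta_1\in Q$.

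\textbf{Inductive step.} Assume the claim holds for every convex combination of $M-1$ points of $Q$. Take $M$ points with weights $\tilde p_i$.
\begin{itemize}
\item If $\tilde p_M=1$, then all other weights vanish (they are nonnegative and sum to $1$), so $\bar\theta=\theta_M\in Q$.
\item Otherwise set $s\coloneqq 1-\tilde p_M=\sum_{i<M}\tilde p_i>0$. Then
\begin{equation}
\bar\theta \;=\; s\,\theta' + (1-s)\,\theta_M,\qquad \theta'\coloneqq \sum_{i=1}^{M-1}\frac{\tilde p_i}{s}\,\theta_i .
\end{equation}
The renormalized weights $\tilde p_i/s$ are nonnegative and sum to $1$. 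By the inductive hypothesis, $\theta'\in Q$. Since $s\in(0,1]$, applying the two-point definition of convexity to $\theta'$ and $\theta_M$ gives $\bar\theta\in Q$.
\end{itemize}

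There is no real obstacle here. The only points needing care are the degenerate case $s=0$, which would otherwise force a division by zero and is handled separately above, and checking that the renormalized weights form a valid convex combination. Zero weights on some clients need no special treatment, since the argument allows $\tilde p_i=0$. In particular, the lemma applies directly to FedAvg over participating clients, with $\tilde p_i=p_i/\sum_{j\in\mathcal{P}_k}p_j$ and $\tilde p_i=0$ for $i\notin\mathcal{P}_k$.
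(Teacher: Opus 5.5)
Your proof is correct: the induction on $M$, with the degenerate case $\tilde p_M=1$ handled separately and the remaining weights renormalized by $s=1-\tilde p_M$, is the standard argument that convex sets are closed under finite convex combinations. The paper gives no separate proof and simply calls the lemma immediate from convexity, so your argument supplies exactly the reasoning it leaves implicit.
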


Lemma~\ref{lem:convex_Q} explains why aggregation is rarely a bottleneck in convex analyses, but this closure property typically fails for deep networks where stable regions (e.g., PL basins) are local and non-convex \citep{liu2022overparameterized_pl}. Empirically, mode-connectivity results show that independently trained solutions can often be connected by low-loss paths, sometimes making linear interpolation near-optimal \citep{garipov2018loss_mode_connectivity,draxler2018essentially_mode_connectivity_2}. This motivates weight averaging and related model-merging methods \citep{ainsworth2022git_rebasin,yang2024model_merge}, and is supported by recent theory under additional conditions \citep{ferbach2024proving_linear_mode_connectivity,kuditipudi2019explaining_linear_mode_connectivty}. Together, these observations help explain why full-parameter FedAvg is often stable: weighted averages of nearby solutions frequently remain within (or close to) a well-behaved region.

\paragraph{Federated LoRA differs from FFT.}
For a linear module, LoRA parameterizes $\mat{W}=\mat{W}^{(0)}+\mat{B}\mat{A}$ with rank $r\ll \min\{d_{\mathrm{out}},d_{\mathrm{in}}\}$, so each client update $\Delta\mat{W}=\mat{B}\mat{A}$ lies in a nonconvex rank-$\le r$ set. Early federated LoRA methods aggregate \emph{factors}: FedIT \citep{zhang2024towards} uses
$\Delta\bar{\mat{W}}^{k}=(\sum_i \tilde p_i\,\mat{B}^{i,k}_{T})(\sum_i \tilde p_i\,\mat{A}^{i,k}_{T})$,
and FFA-LoRA \citep{sun2024improving} fixes one factor so
$\Delta\bar{\mat{W}}^{k}=(\sum_i \tilde p_i\,\mat{B}^{i,k}_{T})\mat{A}^{(0)}$.
These enforce a rank-$\le r$ aggregate (\emph{low-rank-space aggregation}). In contrast, \emph{full-space aggregation} lifts each client adapter to a weight delta, and averages in the ambient space \citep{yanfrlora,wang2024flora}:
\[
\Delta \bar{\mat{W}}^{k}=\sum_{i\in\mathcal{P}_k} \tilde p_i\,(\mat{B}^{i,k}_{T}\mat{A}^{i,k}_{T}),
\]
so $\mathrm{rank}(\Delta \bar{\mat{W}}^{k})$ can grow up to $|\mathcal{P}_k|r$ even when each client update is rank-$\le r$, injecting out-of-subspace components that can destabilize aggregation.

To formalize the geometric tension, consider the rank-$r$ manifold (away from rank-deficient points)
\[
\mathcal{M}_r \coloneqq \{\Delta \mat{W}\in\mathbb{R}^{d_{\mathrm{out}}\times d_{\mathrm{in}}}:\ \mathrm{rank}(\Delta \mat{W})=r\},
\]
with ambient dimension $D_{\mathrm{amb}}=d_{\mathrm{out}}d_{\mathrm{in}}$, manifold dimension $d_{\mathcal{M}}=r(d_{\mathrm{out}}+d_{\mathrm{in}}-r)$, and codimension $\mathrm{codim}(\mathcal{M}_r)=(d_{\mathrm{out}}-r)(d_{\mathrm{in}}-r)$.
To make volumes finite, we restrict to a bounded Frobenius ball $\mathbb{B}_F(\rho)$ and define the tube
\[
\mathcal{M}_r^{(R)} \coloneqq \{\Delta \mat{W}\in \mathbb{B}_F(\rho):\ \mathrm{dist}_F(\Delta \mat{W},\mathcal{M}_r)\le R\}.
\]

\begin{theorem}[Weyl's tube formula (informal) \citep{gray2003tubes}]\label{thm:weyl_tube}
Let $\mathcal{M}$ be a compact $d_{\mathcal{M}}$-dimensional $C^2$ submanifold of $\mathbb{R}^{D_{\mathrm{amb}}}$. For sufficiently small $R$,
\begin{equation}
\begin{aligned}
\mathrm{Vol}(\mathcal{M}^{(R)})
= & \mathrm{Vol}(\mathcal{M})\,
   \mathrm{Vol}\!\big(B^{D_{\mathrm{amb}}-d_{\mathcal{M}}}(R)\big)\\
 + &O\!\big(R^{D_{\mathrm{amb}}-d_{\mathcal{M}}+1}\big),
\end{aligned}
\end{equation}
so $\mathrm{Vol}(\mathcal{M}^{(R)})$ scales as $R^{D_{\mathrm{amb}}-d_{\mathcal{M}}}$ to leading order.
\end{theorem}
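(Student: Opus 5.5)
The plan is the standard normal-exponential-map argument. Because $\mathcal{M}$ is a compact $C^2$ submanifold, it has a positive reach $\tau>0$: every point within distance $\tau$ of $\mathcal{M}$ has a unique nearest point on $\mathcal{M}$. Let $N\mathcal{M}$ denote the normal bundle, whose fibres are $(D_{\mathrm{amb}}-d_{\mathcal{M}})$-dimensional, and define
\[
\Phi:N\mathcal{M}\to\mathbb{R}^{D_{\mathrm{amb}}},\qquad \Phi(x,v)=x+v .
\]
For $R<\tau$, $\Phi$ restricted to the disc bundle $\{(x,v):\|v\|\le R\}$ is a diffeomorphism onto the tube $\mathcal{M}^{(R)}$. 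This is the tubular neighbourhood theorem, and injectivity follows from uniqueness of nearest points. By the change of variables formula,
\[
\mathrm{Vol}(\mathcal{M}^{(R)})=\int_{\mathcal{M}}\int_{\{v\in N_x\mathcal{M}:\,\|v\|\le R\}} |\det D\Phi(x,v)|\,dv\,d\mathrm{vol}_{\mathcal{M}}(x).
\]

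The key step is computing the Jacobian. At $(x,v)$, work in an orthonormal frame that is tangent along $\mathcal{M}$ and normal along the fibre. In this frame $D\Phi$ is block triangular, with the identity on the normal block and $I-S_v$ on the tangent block. Here $S_v$ is the shape operator (second fundamental form) of $\mathcal{M}$ in the normal direction $v$, and it is linear in $v$. Hence
\[
\det D\Phi(x,v)=\det(I-S_v)=1+O(\|v\|).
\]
The constant in the $O(\cdot)$ is uniform in $x$, because the second fundamental form is continuous on a compact set and therefore bounded. For $R$ small, $\|S_v\|<1$, so the determinant is positive and the absolute value can be dropped.

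Integrating over the fibre gives
\[
\int_{\|v\|\le R}\big(1+O(\|v\|)\big)\,dv=\mathrm{Vol}\big(B^{D_{\mathrm{amb}}-d_{\mathcal{M}}}(R)\big)+O\big(R^{D_{\mathrm{amb}}-d_{\mathcal{M}}+1}\big).
\]
Integrating this over $\mathcal{M}$ multiplies by $\mathrm{Vol}(\mathcal{M})$ and yields the stated formula. Weyl's exact result goes further: the expansion of $\det(I-S_v)$ is a polynomial in $v$, its odd terms vanish by the symmetry $v\mapsto -v$, and so the volume is a polynomial in $R$. Only the leading term is needed here.

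The main obstacle is the bijectivity of $\Phi$ on the $R$-disc bundle, which is what turns the integral into the volume of the tube rather than an overcount. I would argue it via positive reach, i.e.\ a uniform Lebesgue-number argument on the compact $\mathcal{M}$. A second caveat concerns the application: the set $\mathcal{M}_r\cap\mathbb{B}_F(\rho)$ is not compact, since it accumulates at rank-deficient matrices. The formula should therefore be applied to a compact piece of it whose singular values are bounded below, which matches the ``away from rank-deficient points'' qualifier. On such a piece the reach is at least of order the smallest singular value $\sigma_r$, so the meaning of ``sufficiently small $R$'' is $R\lesssim\sigma_r$.
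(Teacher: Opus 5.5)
Your argument is correct, and it is the standard proof. The paper gives no proof of its own: it only cites the statement, as an informal result, from Gray's monograph, and the derivation there is essentially yours. That is, the normal-bundle parametrization $\Phi(x,v)=x+v$ with Jacobian $\det(I-S_v)$ integrated over the fibre balls, keeping only the leading term. Your closing caveat is also well taken and more careful than the paper, which applies the formula to the non-compact set $\mathcal{M}_r\cap\mathbb{B}_F(\rho)$; there ``sufficiently small $R$'' must mean $R$ below the reach, which is of order $\sigma_r$ on the piece considered.
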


Theorem~\ref{thm:weyl_tube} highlights an unforgiving geometry: when $\mathrm{codim}(\mathcal{M}_r)$ is large, an $R$-tube around rank-$r$ updates occupies a vanishingly small fraction of the ambient space, so small off-manifold components can push $\Delta\bar{\mat{W}}^k$ outside a ``stable tube'' $Q$.
A complementary view follows from Eckart--Young. Let $\mathcal{M}_{\le r}\coloneqq\{\Delta\mat{W}:\mathrm{rank}(\Delta\mat{W})\le r\}$:
\begin{equation}
\mathrm{dist}_F(\Delta\bar{\mat{W}}^k,\mathcal{M}_{\le r})
=\Big(\sum_{j>r}\sigma_j(\Delta\bar{\mat{W}}^k)^2\Big)^{1/2},
\end{equation}
where $\{\sigma_j(\cdot)\}$ are singular values in nonincreasing order; thus misaligned client updates induce nontrivial tail singular values beyond rank $r$.
We refer to this failure mode as \textbf{update-space mismatch}, which helps explain why more restrictive schemes (e.g., fixing one adaptation factor in FFA-LoRA \citep{sun2024improving}) can trade expressivity for stability as non-IID severity increases.

\subsection{Optimizer-state Mismatch}\label{sec:state_mismatch}
While update-space mismatch governs aggregation stability (event $\mathcal{E}_{\mathrm{agg}}$), local containment (event $\mathcal{E}_{\mathrm{traj}}$) is governed by the stochastic dynamics of client-side optimization.
Adaptive optimizers (e.g., Adam/AdamW \citep{kingma2014adam, hu2021lora}) are the de facto standard for fine-tuning large models; LoRA typically uses AdamW. In FL, clients optimize different non-IID objectives, so their adaptive states (e.g., first/second moments) evolve inconsistently across clients and rounds. We term this phenomenon \textbf{optimizer-state mismatch}. Such mismatch can bias local update directions and amplify drift/noise, increasing the probability that a client trajectory exits the stable region (event $\mathcal{E}_{\mathrm{traj}}^{c}$). Since most federated LoRA methods do not explicitly synchronize adaptive states, characterizing state mismatch propagation through local dynamics is critical.

We compare each client to a centralized reference optimizer, which induces a canonical state $(m^{\star,k}_0,v^{\star,k}_0)$ at the start of round $k$. Practical protocols approximate these states (e.g., pseudo-gradients from deltas \citep{reddi2020adaptive_fedopt, douillard2023diloco} or state aggregation \citep{sun2023efficient_adaptive_fl_second, liu2025fedadamw}); we model the resulting discrepancy as a bounded initialization error.

\begin{definition}[Biased state initialization]\label{def:state_bias_compact}
At the start of round $k$, client $i$ initializes $(m^{i,k}_0,v^{i,k}_0)$ while the reference uses $(m^{\star,k}_0,v^{\star,k}_0)$.
Assume $\|m^{i,k}_0-m^{\star,k}_0\|\le B_m$ and $\|v^{i,k}_0-v^{\star,k}_0\|\le B_v$ for all $i,k$, where $\|\cdot\|$ denotes $\ell_2$ for vectors and Frobenius norm for matrices (set $B_v=0$ for optimizers without $v$).
\end{definition}

Let $\varepsilon_{\mathrm{noise}}(\delta)$ denote the W.H.P.\ bound from Lemma~\ref{lem:noise_env} such that, with probability at least $1-\delta$,
\[
\max_{k\le K-1,\; t\le T,\; i\in[M]}\|\xi_t^{i,k}\|_2 \le \varepsilon_{\mathrm{noise}}(\delta).
\]

\begin{theorem}[W.H.P.\ local-containment radius]\label{thm:radius_local}
Assume (i) Assumptions~\ref{ass:smoothness} and~\ref{ass:bounded_grad}, (ii) the noise envelope above, and (iii) Definition~\ref{def:state_bias_compact}. Let $\{\theta_{t}^{\star,k}\}_{t=0}^{T}$ be the within-round reference trajectory started from $\bar\theta^k$ under the corresponding centralized update rule. Suppose $\eta\le \frac{1}{2LT}$ and standard hyperparameters are used (Appendix~\ref{appendix:local_training_operator}).

Then for any $\delta\in(0,1)$, with probability at least $1-\delta$ over mini-batch sampling, simultaneously for all rounds $k\le K-1$, all clients $i\in[M]$, and all local steps $t\le T$,
\[
\|\theta^{i,k}_t-\theta^{\star,k}_t\|_2 \le \mathcal{R}_{\mathrm{loc}}(\delta),
\]
where, up to absolute constants,
\[
\mathcal{R}_{\mathrm{loc}}(\delta)\;\lesssim\;
\mathcal{R}_{\mathrm{drift}}(\delta)+\mathcal{R}_{\mathrm{state}},
\]
with
$
\mathcal{R}_{\mathrm{drift}}(\delta)=
\begin{cases}
\eta T\big(G+\varepsilon_{\mathrm{noise}}(\delta)\big), & \mathrm{SGD/Momentum},\\[2pt]
\frac{\eta T}{\sqrt{\epsilon}}\big(G+\varepsilon_{\mathrm{noise}}(\delta)\big), & \mathrm{AdamW},
\end{cases} 
$

and

$
\mathcal{R}_{\mathrm{state}}=
\begin{cases}
0, & \mathrm{SGD},\\[2pt]
\eta\,\dfrac{B_m}{1-\beta_1},  & \mathrm{Momentum},\\[6pt]
\eta\!\left(\dfrac{B_m}{(1-\beta_1)\sqrt{\epsilon}}+\dfrac{G\,B_v}{(1-\beta_2)\epsilon^{3/2}}\right), & \mathrm{AdamW}.
\end{cases}
$

Here AdamW uses the elementwise preconditioner $(v+\epsilon)^{-1/2}$ with $\epsilon>0$.
\end{theorem}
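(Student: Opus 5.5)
We work on the noise event $\mathcal{E}_{\mathrm{noise}}$ of Lemma~\ref{lem:noise_env}, on which $\max_{k,t,i}\|\xi^{i,k}_t\|\le\varepsilon_{\mathrm{noise}}(\delta)$. This event has probability at least $1-\delta$. On it the rest of the argument is deterministic and uniform in $(i,k,t)$, so simultaneity over rounds, clients and steps is automatic and no further union bound is needed. Fix a round $k$ and a client $i$. Both the client and the reference start from $\bar\theta^k$, so $\theta^{i,k}_0=\theta^{\star,k}_0$. We then write $\theta^{i,k}_t-\theta^{\star,k}_t=-\eta\sum_{s<t}(u^{i}_s-u^{\star}_s)$, where $u$ denotes the per-step update direction. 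For AdamW the decoupled weight-decay term contributes $\eta\lambda(\theta^i_s-\theta^\star_s)$. Under the standard hyperparameters $\eta\lambda T\lesssim 1$, so this term is absorbed by a discrete Grönwall factor that only changes absolute constants.

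Next we split each update difference into a \emph{drift/noise part} and a \emph{state part}. For SGD, $u^i_s=g^{i,k}_s$. Assumption~\ref{ass:bounded_grad} together with the noise envelope gives $\|g^{i,k}_s\|\le G+\varepsilon_{\mathrm{noise}}$, and the same bound holds for the reference. Summing over at most $T$ steps gives $\eta T(G+\varepsilon_{\mathrm{noise}})$, and $\mathcal{R}_{\mathrm{state}}=0$.

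For momentum, unrolling $m_s=\beta_1 m_{s-1}+(1-\beta_1)g_s$ gives $m^i_s-m^\star_s=\beta_1^s(m^i_0-m^\star_0)+(1-\beta_1)\sum_{j\le s}\beta_1^{s-j}(g^i_j-g^\star_j)$. The gradient sum is a convex combination of vectors of norm at most $2(G+\varepsilon_{\mathrm{noise}})$, which contributes to $\mathcal{R}_{\mathrm{drift}}$. The initialization term sums to $\eta B_m\sum_s\beta_1^s\le \eta B_m/(1-\beta_1)$. A possible $(1-\beta_1)$ normalization convention only affects constants.

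Smoothness (Assumption~\ref{ass:smoothness}) with $\eta\le 1/(2LT)$ could alternatively control $\|\nabla F_i(\theta^i_s)-\nabla F_i(\theta^\star_s)\|\le L\|\theta^i_s-\theta^\star_s\|$ via Grönwall, with $(1+\eta L)^T\le e^{1/2}$. However, the crude bound by $G$ already suffices for the stated form, so we use smoothness only to keep constants absolute.

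For AdamW, the update is $m_s\odot(v_s+\epsilon)^{-1/2}$. We decompose
\[
u^i_s-u^\star_s=(m^i_s-m^\star_s)\odot(v^i_s+\epsilon)^{-1/2}+m^\star_s\odot\big[(v^i_s+\epsilon)^{-1/2}-(v^\star_s+\epsilon)^{-1/2}\big].
\]
Since $v\ge 0$, the first factor is at most $\epsilon^{-1/2}$ coordinatewise. Combined with the momentum analysis, this yields the drift term $\frac{\eta T}{\sqrt\epsilon}(G+\varepsilon_{\mathrm{noise}})$ and the state term $\frac{\eta B_m}{(1-\beta_1)\sqrt\epsilon}$.

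For the second part, the map $x\mapsto(x+\epsilon)^{-1/2}$ is $\tfrac12\epsilon^{-3/2}$-Lipschitz on $x\ge0$. Unrolling $v$ gives $v^i_s-v^\star_s=\beta_2^s(v^i_0-v^\star_0)+(1-\beta_2)\sum_j\beta_2^{s-j}\big((g^i_j)^2-(g^\star_j)^2\big)$. The initialization piece, multiplied by $\|m^\star_s\|_\infty\lesssim G$ and summed over $s$, gives $\eta\frac{G B_v}{(1-\beta_2)\epsilon^{3/2}}$. The gradient-square piece we bound by the crude estimate $\|m^\star_s\odot(v^\star_s+\epsilon)^{-1/2}\|\le\|m^\star_s\|/\sqrt\epsilon$, applied to each of $u^i_s$ and $u^\star_s$ separately. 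This folds it into $\mathcal{R}_{\mathrm{drift}}$ and avoids an $\epsilon^{-3/2}$ factor on the drift.

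\textbf{Main obstacle.} The delicate step is this AdamW cross term. We must route the $v$-mismatch through the initialization bias alone, using $\|v^i_0-v^\star_0\|\le B_v$ and the geometric factor $\sum_s\beta_2^s\le 1/(1-\beta_2)$. At the same time, the gradient-induced $v$-differences must be absorbed into the drift term by the boundedness $\|u\|\le\|m\|/\sqrt\epsilon$ rather than by Lipschitzness. We also need a uniform bound $\|m^\star_s\|\lesssim G+B_m+\varepsilon_{\mathrm{noise}}$. Its extra pieces produce only higher-order products such as $B_mB_v$ and $\varepsilon_{\mathrm{noise}}B_v$, which we absorb into ``up to absolute constants'' under the standard hyperparameter regime.

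Summing the bounds over $t\le T$ and taking the maximum over $i$ and $k$ on $\mathcal{E}_{\mathrm{noise}}$ gives $\mathcal{R}_{\mathrm{loc}}\lesssim\mathcal{R}_{\mathrm{drift}}+\mathcal{R}_{\mathrm{state}}$.
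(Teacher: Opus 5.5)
Once the two corrections in the second paragraph are made, your argument is sound. Its AdamW part is essentially the paper's. You use the same add-and-subtract of $m^\star_s\odot(v^i_s+\epsilon)^{-1/2}$ and the same crude bound $\|m^i_s-m^\star_s\|\le\beta_1^sB_m+2G+\varepsilon_{\mathrm{noise}}$. You also split the preconditioner difference the same way: an initialization piece controlled by the $\tfrac12\epsilon^{-3/2}$ Lipschitz bound, and a gradient-induced piece controlled by the crude $\epsilon^{-1/2}$ bound.

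To make that split precise, introduce an intermediate state, as the paper does with $v^\star_{t+1}+r_{t+1}$: let $\hat v_s\coloneqq\beta_2^s v^\star_0+(1-\beta_2)\sum_{j<s}\beta_2^{s-1-j}(g^i_j)^{\odot 2}\ge 0$, so that $v^i_s-\hat v_s=\beta_2^s(v^i_0-v^\star_0)$ exactly. Your phrase ``applied to each of $u^i_s$ and $u^\star_s$ separately'' should then mean bounding $m^\star_s\odot(\hat v_s+\epsilon)^{-1/2}$ and $m^\star_s\odot(v^\star_s+\epsilon)^{-1/2}$ separately.

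Where you genuinely depart from the paper is SGD and momentum. You bound $\|g^i_s-g^\star_s\|\le 2G+\varepsilon_{\mathrm{noise}}$ directly. The paper instead peels off $\nabla f(\theta_s)-\nabla f(\theta^\star_s)$, bounds it by $L\|\Delta_s\|$, and closes a discrete Gr\"onwall inequality using $\eta LT\le\frac12$. Your route is shorter and gives a slightly better constant. It also shows that smoothness and the step-size condition are not needed for this worst-case bound. Both routes still need the iterates to stay in $Q$ (or global clipping) to invoke Assumption~\ref{ass:bounded_grad}. The paper's Gr\"onwall template only pays off when drift is measured at a common point rather than through the $2G$ clipping bound, as in Corollary~\ref{cor:expect_convergence}.

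Two statements need correcting.

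\textbf{The ``main obstacle'' paragraph.} It confuses $m^\star_s$ with $m^i_s$. The reference moment is an EMA of the noiseless gradients $\nabla f(\theta^\star_s)$. So $\|m^\star_s\|\le\max\{\|m^\star_0\|,G\}=G$ once the reference's initial moment is bounded by $G$, which the paper assumes explicitly. Neither $B_m$ nor $\varepsilon_{\mathrm{noise}}$ enters. This matters because the absorption you propose is false. A cross term $\eta\,\varepsilon_{\mathrm{noise}}B_v/((1-\beta_2)\epsilon^{3/2})$ is not bounded by an absolute constant times $\mathcal{R}_{\mathrm{drift}}+\mathcal{R}_{\mathrm{state}}$: take $G\ll\varepsilon_{\mathrm{noise}}$ and $B_v\gg(1-\beta_2)\epsilon T$.

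With $\|m^\star_s\|\le G$ these terms never appear. In fact the step is not delicate at all for nonnegative second-moment states. Use the fully crude bound $\|u^i_s-u^\star_s\|\le(\|m^i_s\|+\|m^\star_s\|)/\sqrt{\epsilon}$ together with $\|m^i_s\|\le G+\varepsilon_{\mathrm{noise}}+\beta_1^sB_m$, which holds since $\|m^i_0\|\le G+B_m$. This already yields a bound of the stated form, with no $B_v$ term at all.

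\textbf{The $(1-\beta_1)$ normalization.} It is not a matter of absolute constants. With heavy-ball momentum $m_{t+1}=\mu m_t+g_t$, which is the form in the paper's momentum algorithm, the gradient sum carries total weight up to $1/(1-\mu)$. The drift term then becomes $\frac{\eta T}{1-\mu}(G+\varepsilon_{\mathrm{noise}})$. The stated bound requires the EMA convention, which is the one the paper's proof uses, so adopt it explicitly.
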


We defer the proof to Appendix~\ref{app:proof_radius_local}. Theorem~\ref{thm:radius_local} shows that optimizer-state mismatch directly \emph{inflates} the local containment radius, increasing the chance of local escape and imposing a stricter geometric requirement on aggregation stability. Notably, the second-moment mismatch term is amplified by the preconditioner: small perturbations in $v$ translate into larger changes in $(v+\epsilon)^{-1/2}$, which motivates prioritizing second-moment synchronization, consistent with \citep{liu2025fedadamw, sun2023efficient_adaptive_fl_second}. Moreover, under bounded heterogeneity (Assumption~\ref{ass:bounded_heterogeneity}), worst-case drift control can be replaced by an average/RMS drift bound, yielding improved \emph{in-expectation} convergence guarantees (Corollary~\ref{cor:expect_convergence} in Appendix).

\section{The Proposed \methodname{} Method}

\begin{figure*}[t]
    \centering
    \begin{subfigure}[b]{0.32\textwidth}
        \centering
        \includegraphics[height=3.8cm, width=\textwidth]{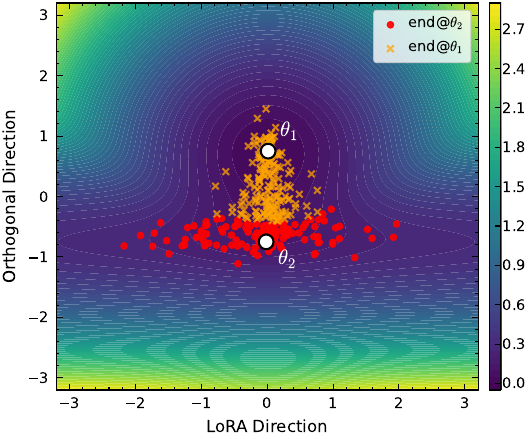}
        \caption{GaLore (Synthetic)}
        \label{fig:syn_galore}
    \end{subfigure}
    \hfill
    \begin{subfigure}[b]{0.32\textwidth}
        \centering
        \includegraphics[height=3.8cm, width=\textwidth]{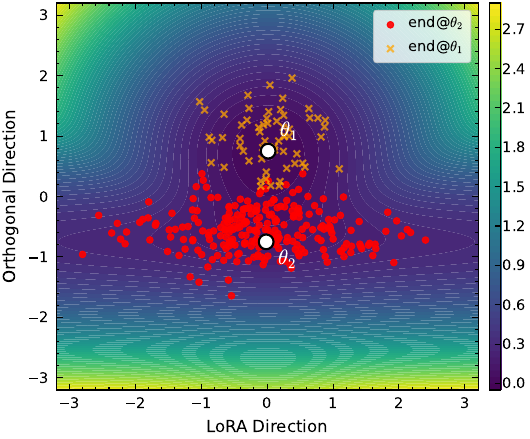}    
        \caption{LoRA (Synthetic)}
        \label{fig:syn_lora}
    \end{subfigure}
    \hfill
    \begin{subfigure}[b]{0.32\textwidth}
        \centering
        \includegraphics[height=3.8cm, width=\textwidth]{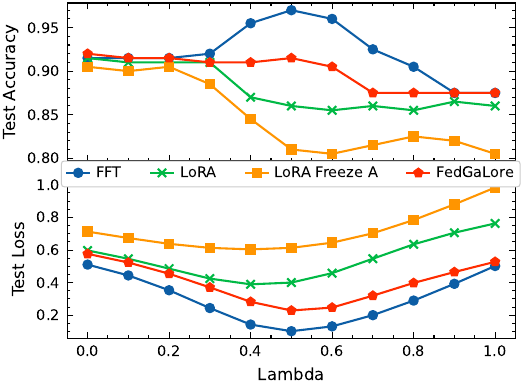}
        \caption{Linear Connectivity (ViT)}
        \label{fig:connectivity}
    \end{subfigure}
    \vspace{-5pt}
\caption{\textbf{Geometric robustness and aggregation stability.}
\textbf{(a--b)} Synthetic landscape: GaLore-style updates reach flat basins more often than fixed-subspace LoRA (60\% vs.\ 20\%; Appendix~\ref{app:synthetic_exp}). 
\textbf{(c)} Non-IID ViT-MNIST: linear interpolation between client models shows low loss barriers for FFT and GaLore-style updates, while LoRA exhibits a higher barrier, indicating reduced aggregation robustness.}
\label{fig:geometric_analysis}
\end{figure*}

Our analysis identifies two coupled failure modes in federated PEFT under heterogeneity:
(1) \emph{Update-space mismatch:} FFT is often aggregation-robust but expensive, whereas LoRA constrains updates to a fixed low-rank manifold, making aggregation brittle under non-IID.
(2) \emph{Optimizer-state mismatch:} adaptive optimizers speed up local training, but unsynchronized second-moment states amplify drift via preconditioning, degrading local containment and aggregation stability.
\methodname{} addresses both by combining (i) client-side \emph{gradient-subspace} optimization with low-rank states and (ii) server-side drift-robust synchronization of second-moment matrices via spectral shared-signal extraction.
Algorithm~\ref{alg:fedgalore} summarizes one round.

\paragraph{Client-side optimization (GaLore on target modules).}
\methodname{} applies GaLore-style \emph{gradient-subspace} optimization to target weight matrices (e.g., attention $\{\mat{W}_Q,\mat{W}_K,\mat{W}_V,\mat{W}_O\}$ and MLP projection matrices), each treated as a matrix block $\mat{W}\in\mathbb{R}^{n\times n}$ for exposition.\footnote{Rectangular blocks are handled analogously. In AdamW, the second-moment state is elementwise; we write $\tilde{\mat{v}}$ for the elementwise buffer reshaped to match the projected tensor.}
At local step $t$, client $i$ maintains a rank-$r$ projector $\mat{P}^{i,k}_t\in\mathbb{R}^{r\times n}$ and computes the dense gradient $\mat{g}^{i,k}_t=\nabla_{\mat{W}}\ell(\theta^{i,k}_t;\mathcal{B}^{i,k}_t)\in\mathbb{R}^{n\times n}$. Under default GaLore (\texttt{proj\_type=std}), we use a one-sided right projection
\[
\tilde{\mat{g}}^{i,k}_t=\mat{g}^{i,k}_t(\mat{P}^{i,k}_t)^\top \in\mathbb{R}^{n\times r},
\]
and maintain projected optimizer buffers $(\tilde{\mat{m}}^{i,k}_t,\tilde{\mat{v}}^{i,k}_t)\in\mathbb{R}^{n\times r}$, reprojecting them whenever $\mat{P}^{i,k}_t$ is refreshed (every $\tau$ steps).

\noindent\textbf{SVD-to-random projectors.}
Following prior work \citep{pan2025unbiased_galore}, we use data-driven projectors (randomized SVD/RSVD) in early local epochs and switch to a random orthonormal projector in later epochs to reduce overhead. In the random phase, the projector is fully determined by an \emph{integer seed}. Accordingly, the server broadcasts a per-round seed $s_k$ enabling clients to deterministically construct the same projector without transmitting basis matrices. We show in Appendix~\ref{app:rgalore_works} that this schedule improves wall-clock efficiency and accelerates local training convergence.

\noindent\textbf{What clients send.}
After $T$ local steps, client $i$ uploads (i) a rank-$r$ factorized model update for the adapted blocks (or equivalently $\Delta\theta_i=\theta^{i,k}_T-\bar{\theta}_k$ in compressed form), and (ii) the projected second-moment buffer $\tilde{\mat{v}}^{i,k}_T\in\mathbb{R}^{n\times r}$ used for server-side synchronization.

This differs from LoRA in a crucial way: LoRA fixes a low-rank \emph{parameterization}, whereas \methodname{} adapts a low-rank \emph{update subspace} estimated from recent gradients. As a result, clients can exploit informative directions outside any fixed low-rank manifold, producing trajectories that are empirically closer to full fine-tuning (FFT) in terms of aggregation robustness \cite{he2024subspace_improve_galore_guarantee}. Figure~\ref{fig:geometric_analysis} supports this intuition: on synthetic landscapes, \methodname{} reaches flatter basins more often than LoRA, and on non-IID ViT-MNIST, it exhibits improved linear connectivity between client models, indicating enhanced aggregation stability.

\begin{algorithm}[t]
\caption{\methodname{} (one communication round $k$; one adapted block $\mat{W}\in\mathbb{R}^{n\times n}$)}
\label{alg:fedgalore}
\begin{algorithmic}[1]\small
\REQUIRE Global model $\bar{\theta}_k$; rank $r$; local steps $T$; projector refresh period $\tau$; client weights $\{\tilde p_i\}$; projection seed $s_k$.
\STATE \textbf{Server:} broadcast $\bar{\theta}_k$ (and seed $s_k$ if using seeded random projectors) to participating clients $\mathcal{P}_k$.
\FORALL{$i\in\mathcal{P}_k$ \textbf{in parallel}}
    \STATE Initialize $\theta^{i,k}_0\leftarrow \bar{\theta}_k$ and GaLoreAdamW state for the adapted block: projected buffers $(\tilde{\mat{m}}^{i,k}_0,\tilde{\mat{v}}^{i,k}_0)\in\mathbb{R}^{n\times r}$ and projector $\mat{P}^{i,k}_0$.
    \FOR{$t=0$ to $T-1$}
        \STATE Compute gradient on the adapted block: $\mat{g}^{i,k}_t=\nabla_{\mat{W}}\ell(\theta^{i,k}_t;\mathcal{B}^{i,k}_t)\in\mathbb{R}^{n\times n}$.
        \STATE GaLoreAdamW step (one-sided projection + buffer reprojection when refreshed every $\tau$ steps):
        $\theta^{i,k}_{t+1}\leftarrow \mathrm{GaLoreAdamW}(\theta^{i,k}_t,\mat{g}^{i,k}_t;\,r,\tau,s_k)$.
    \ENDFOR
    \STATE Client update: 
    $\Delta\theta_i\leftarrow \theta^{i,k}_T-\bar{\theta}_k$ (rank-$r$ factorized).
    \STATE Upload 
    $(\Delta\theta_i,\tilde{\mat{v}}^{i,k}_T)
    $, where $\tilde{\mat{v}}^{i,k}_T\in\mathbb{R}^{n\times r}$ is the projected second-moment buffer for this block.
\ENDFOR
\STATE \textbf{Server:} aggregate parameters (FedAvg):
$\bar{\theta}_{k+1}\leftarrow \bar{\theta}_k+\sum_{i\in\mathcal{P}_k}\tilde p_i\,\Delta\theta_i$.
\STATE Construct matrix views using the shared seed/basis and synchronize second moments:
$\bar{\mat{v}}_{k+1}\leftarrow \mathrm{AJIVE}\big(\{\mat{V}^{i,k}\}_{i\in\mathcal{P}_k}\big)$, where $\mat{V}^{i,k}\coloneqq \tilde{\mat{v}}^{i,k}_T\,\mat{R}_k^\top\in\mathbb{R}^{n\times n}$ and $\mat{R}_k$ is the (right) basis reconstructed from $s_k$.
\STATE Broadcast $(\bar{\theta}_{k+1},\bar{\mat{v}}_{k+1})$ (and next seed $s_{k+1}$ if used).
\end{algorithmic}
\end{algorithm}

\paragraph{Server-side: FedAvg for parameters, AJIVE for second-moment synchronization.}
The server aggregates model parameters by FedAvg,
$\bar{\theta}_{k+1}\leftarrow \bar{\theta}_k+\sum_{i\in\mathcal{P}_k}\tilde p_i\,\Delta\theta_i$.
To mitigate optimizer-state mismatch with minimal communication, \methodname{} synchronizes only the \emph{projected second-moment} buffer (the preconditioner state), which our W.H.P.\ analysis identifies as the most sensitive component under heterogeneity (Theorem~\ref{thm:radius_local}). Concretely, for each adapted block, client $i$ uploads its projected second-moment buffer $\tilde{\mat{v}}^{i,k}_T\in\mathbb{R}^{n\times r}$ (together with the model update). 

\noindent\textbf{Seeded matrix view.}
In the random-adaptive regime, the projector is determined by an integer seed $s_k$ broadcast by the server; all parties deterministically reconstruct the same orthonormal basis $\mat{R}_k\in\mathbb{R}^{n\times r}$ for each block.\footnote{For square blocks under the default right projection, $\mat{R}_k$ is the right basis; rectangular blocks follow the side chosen by GaLore's \texttt{std} rule.}
Then, for each uploaded $\tilde{\mat{v}}^{i,k}_T$ server can project them back to the full space using:
\begin{equation}\label{eq:matrix_view}
\mat{V}^{i,k}\;\coloneqq\;\tilde{\mat{v}}^{i,k}_T\,\mat{R}_k^\top\in\mathbb{R}^{n\times n},
\end{equation}
so each client contributes a rank-$\le r$ within the a shared space $\mathbb{R}^{n \times n}$.

\noindent\textbf{Why AJIVE.}
Under non-IID data, these views contain a shared component (task-level curvature/preconditioning structure) plus client-specific drift and noise. We model
\begin{equation}\label{eq:ajive_decomposition}
\mat{V}^{i,k}=\mat{J}^k+\mat{A}^{i,k}+\mat{E}^{i,k},
\end{equation}
and apply AJIVE \citep{feng2018ajive} across $\{\mat{V}^{i,k}\}_{i\in\mathcal{P}_k}$ to estimate the joint low-rank component $\mat{J}^k$ (Details in Appendix \ref{appendix:ajive_intro}). AJIVE uses a joint-rank parameter $k$ (shared component rank); we set $k=r$ so the synchronized state matches the client projector dimension and can be broadcast as the next-round initializer. Appendix~\ref{appendix:ajive} provides validation experiments showing AJIVE is more robust than naive averaging (even with post-hoc SVD re-projection).

 \paragraph{Computation and communication.}

\begin{table}[h]
\centering
\caption{\textbf{Per-round overhead for one adapted block $\mat{W}\in\mathbb{R}^{n\times n}$.}
Client VRAM counts optimizer buffers for this block only (activations excluded). Uplink counts client$\rightarrow$server payload for this block.}
\label{tab:overhead_block}
\footnotesize
\setlength{\tabcolsep}{3pt}
\renewcommand{\arraystretch}{1.10}
\resizebox{\columnwidth}{!}{%
\begin{tabular}{lcc}
\toprule
& \textbf{Federated LoRA } & \textbf{\methodname{}} \\
\midrule
Client VRAM (states) & $\mathcal{O}(nr)$ & $\mathcal{O}(nr)$ \\
Client extra compute & -- & $\textsc{ProjRefresh}/\tau$ \\
Uplink (per round) & $\mathcal{O}(nr)$ & $\mathcal{O}(nr)$ (update) $+\ \mathcal{O}(nr)$ ($\tilde{\mat{v}}$) \\
\midrule
Server extra compute & -- & $\textsc{AJIVE}(K,n)$ (Table~\ref{tab:ajive_scaling_1024}) \\
\bottomrule
\end{tabular}%
}
\end{table}

We summarize client and server overhead for a single adapted block $\mat{W}\in\mathbb{R}^{n\times n}$ with rank $r$ in Table~\ref{tab:overhead_block}. 
On the client, \methodname{} maintains GaLore-projected optimizer buffers of shape $n\times r$ and refreshes the projector every $\tau$ steps (randomized SVD/RSVD in early epochs, then seeded random projection in later epochs), which amortizes the projection cost over local training.
On the server, AJIVE is applied once per round per adapted block and scales with the number of participating clients (views $=|\mathcal{P}_k|$), not the total client population.
Appendix~\ref{appendix:ajive} benchmarks AJIVE on dense $n\times n$ views up to $n{=}1024$ (Table~\ref{tab:ajive_scaling_1024}); for views$=5$ it adds $\approx$22\,ms on an A100 and $\approx$93\,ms on CPU, which is small relative to per-round training time.

Communication remains LoRA-like: the only additional client uplink beyond Federated LoRA is an $n\times r$ projected second-moment buffer per adapted block (plus a seed/index when using seeded random projectors), which is the same order as transmitting LoRA factors and remains far smaller than dense $n\times n$ states.

\section{Experimental Evaluation}\label{sec:exp}
\begin{table*}[t]
\caption{GLUE Benchmark Results. \ding{51} denotes IID. \ding{53} denotes Non-IID, $\Delta$ indicates the difference. More results in Appendix \ref{app:results_of_glue}. }
\label{tab:glue}
\centering
\small
\label{subtab:glue1}
\begin{tabularx}{\textwidth}{@{}l *{4}{XXX}@{}}
\toprule
\textbf{Method } & 
\multicolumn{3}{c}{\textbf{CoLA} Acc\%} & 
\multicolumn{3}{c}{\textbf{SST} Acc\%} & 
\multicolumn{3}{c}{\textbf{MRPC} Acc\%} & 
\multicolumn{3}{c}{\textbf{QQP} Acc\%} \\
\cmidrule(lr){2-4} \cmidrule(lr){5-7} \cmidrule(lr){8-10} \cmidrule(lr){11-13}
 & {\ding{51}} & {\ding{53}} & {$\Delta$} 
 & {\ding{51}} & {\ding{53}} & {$\Delta$} 
 & {\ding{51}} & {\ding{53}} & {$\Delta$} 
 & {\ding{51}} & {\ding{53}} & {$\Delta$} \\
\midrule
FedAvg-Full  & 83.6 & 83.0 & $\downarrow$0.6 & 94.6 & 94.0 & $\downarrow$0.6 & 89.7& 83.0& $\downarrow$6.7 & 84.4 & 82.1 & $\downarrow$2.3 \\

\midrule

FedIT  & 71.2 & 65.4 & $\downarrow$5.8 & 88.1 & 83.7 & $\downarrow$4.4 & 82.3 & 76.9 & $\downarrow$5.4 & 78.9 & 72.4 & $\downarrow$6.5\\
FFA-LoRA  & 84.0 & 78.3 & $\downarrow$5.7 & 94.1 & 88.4 & $\downarrow$5.7 & 89.3 & 82.1 & $\downarrow$7.2 & 84.7 & 78.2 & $\downarrow$6.5 \\
FLoRA  & \textbf{84.7} & 75.1 & $\downarrow$9.6 & 92.1 & 85.2 & $\downarrow$6.9 & 87.8 & 82.5 & $\downarrow$5.3 & 86.3 & 77.1 & $\downarrow$9.2 \\
LoRA-Fair  & 81.7 & 80.1 & $\downarrow$1.6 & 87.4 & 84.2 & $\downarrow$3.2 & 86.5 & 82.5 & $\downarrow$4.0 & 83.9 & 82.1 & $\downarrow$1.8 \\
FR-LoRA  & 83.9 & 79.4 & $\downarrow$4.5 & 93.7 & 87.6 & $\downarrow$6.1 & 84.3 & 79.2 & $\downarrow$5.1 & \textbf{88.2} & 80.1 & $\downarrow$8.1 \\
\midrule
\methodname{}$^-$  & 83.2 & 81.4 & {$\downarrow$1.8} & \textbf{94.5} & 90.2 & $\downarrow$4.3 & 89.8 & 84.5 & $\downarrow$5.3 & 85.4 & 81.5 & $\downarrow$3.9\\
\methodname{}  & 83.9 & \textbf{83.1} & \textbf{$\downarrow$0.8} & 94.8 & \textbf{93.9} & \textbf{$\downarrow$0.9} & \textbf{89.8} & \textbf{88.2} & \textbf{$\downarrow$1.6} & 84.3 & \textbf{82.5} & \textbf{$\downarrow$1.8}\\
\bottomrule
\end{tabularx}
\end{table*}

\begin{table*}[t]
    \caption{DomainNet Results (excluding Real domain). Accuracies in \%. \ding{51} denotes IID, \ding{53} denotes Non-IID, $\Delta$ indicates the difference.} 
    \label{tab:domainnet_revised}
    \setlength{\tabcolsep}{1.5pt} 
    \small 
    \begin{tabularx}{\textwidth}{@{\hspace{4pt}}l@{\hspace{4pt}} *{5}{>{\centering\arraybackslash}X >{\centering\arraybackslash}X >{\centering\arraybackslash}X}@{}} 
    \toprule
    \begin{tabular}[t]{@{}l@{}}\textbf{Method}\\\end{tabular} & 
    \multicolumn{3}{c}{\textbf{Clipart}} & 
    \multicolumn{3}{c}{\textbf{Painting}} & 
    \multicolumn{3}{c}{\textbf{Infograph}} & 
    \multicolumn{3}{c}{\textbf{Quickdraw}} & 
    \multicolumn{3}{c}{\textbf{Sketch}} \\
    \cmidrule(lr){2-4} \cmidrule(lr){5-7} \cmidrule(lr){8-10} \cmidrule(lr){11-13} \cmidrule(lr){14-16}
    & {\ding{51}} & {\ding{53}} & {$\Delta$} & {\ding{51}} & {\ding{53}} & {$\Delta$} & {\ding{51}} & {\ding{53}} & {$\Delta$} & {\ding{51}} & {\ding{53}} & {$\Delta$} & {\ding{51}} & {\ding{53}} & {$\Delta$} \\
    \midrule
    FedAvg-Full  & 82.3 & 78.9 & $\downarrow$3.4 & 79.2 & 75.8 & $\downarrow$3.4 & 54.3 & 48.5 & $\downarrow$5.8 & 71.0 & 67.2 & $\downarrow$3.8 & 78.3 & 74.1 & $\downarrow$4.2 \\
    \cmidrule{1-16} 
    FedIT  & 80.1 & 74.2 & $\downarrow$5.9 & 77.5 & 70.8 & $\downarrow$6.7 & 52.8 & 45.9 & $\downarrow$6.9 & 69.4 & 62.8 & $\downarrow$6.6 & 76.2 & 69.1 & $\downarrow$7.1 \\
    FFA-LoRA  & 79.8 & 72.5 & $\downarrow$6.3 & 76.9 & 71.2 & $\downarrow$5.7 & 51.9 & 46.2 & $\downarrow$5.7 & 68.7 & 63.5 & $\downarrow$5.2 & 75.8 & 68.4 & $\downarrow$7.4 \\
    FLoRA  & 80.7 & 74.1 & $\downarrow$6.6 & 77.8 & 71.3 & $\downarrow$6.5 & 52.1 & 45.0 & $\downarrow$7.1 & 69.5 & 63.2 & $\downarrow$6.3 & 76.0 & 68.5 & $\downarrow$7.5 \\

    LoRA-Fair  & 82.3 & 81.1 & $\downarrow$\textbf{1.2} & 73.2 & 70.3 & $\downarrow$2.9 & 43.8 & 41.0 & $\downarrow$\textbf{2.8} & 63.3 & 59.1 & $\downarrow$4.2 & 73.4 & 68.5 & $\downarrow$4.9 \\
    
    FR-LoRA  & 81.7 & 77.9 & $\downarrow$3.8 & 78.2 & 74.1 & $\downarrow$4.1 & 52.1 & 43.3 & $\downarrow$8.8 & 70.2 & 64.6 & $\downarrow$5.6 & 77.1 & 69.1 & $\downarrow$8.0 \\

    \midrule
    \methodname{}$^-$ & 82.1 & 71.3 & $\downarrow$10.8 & \textbf{79.5} & 72.4 & $\downarrow$7.1 & \textbf{53.8} & 41.9 & $\downarrow$11.9 & 69.2 & 64.3 & $\downarrow$4.9 & 76.8 & 70.5 & $\downarrow$ 6.3 \\
    \methodname{} & \textbf{81.9} & \textbf{78.2} & $\downarrow$3.7 & 78.8 & \textbf{75.1} & $\downarrow$3.7 & \textbf{53.8} & \textbf{47.6} & $\downarrow$6.2 & \textbf{70.5} & \textbf{66.4} & $\downarrow$\textbf{4.1} & \textbf{77.9} & \textbf{73.5} & $\downarrow$\textbf{4.4} \\
    \bottomrule
    \end{tabularx}\label{tab:domainnet}
\end{table*}

We evaluate \methodname{} on three domains:
\textbf{NLU:} RoBERTa-base \citep{liu2019roberta} on 7 GLUE tasks \citep{wang2018glue};
\textbf{Vision:} ViT-base \citep{dosovitskiy2020vit} on 6 DomainNet domains \citep{peng2019domainnet};
\textbf{NLG:} Llama-2-7B \citep{touvron2023llama} fine-tuned on MetaMathQA \citep{yu2023metamath} and evaluated on GSM8K \citep{cobbe2021gsm8k} and MATH \citep{hendrycksmath}.

\textbf{Baselines.} We compare to FedAvg-Full (full fine-tuning\footnote{For Llama-2-7B, ``full fine-tuning'' refers to updating the full weight matrices of the target modules where LoRA/GaLore is applied (not the entire model).}), FedIT \citep{zhang2024towards}, FFA-LoRA \citep{sun2024improving}, FLoRA \citep{wang2024flora}, FR-LoRA \citep{yanfrlora}, and LoRA-Fair \citep{bian2024lora-fair}. We use each baseline’s original optimizer choice when available, and otherwise match learning rate, rank, and target modules.  We also report \methodname$^{-}$ (GaLore in FL without drift-robust state synchronization) as an ablation. All PEFT methods use the same target modules and rank.

\textbf{Federated protocol.} For NLU and vision, we use $M{=}50$ clients with partial participation ($K{=}5$/round, sampled uniformly). For NLG, we use $M{=}4$ clients with full participation ($K{=}4$) due to higher per-client cost. All methods use the same training budget (rounds and local steps). Hyperparameters and implementation details are in Appendix~\ref{appendix:hyperparameters}.

\begin{table}[t]
\caption{LLM Federated Fine-Tuning Results. Accuracies in \% on GSM8K  and Math. \ding{51} denotes IID, \ding{53} denotes Non-IID, $\Delta$ indicates the difference. Higher is better.}
\label{tab:llm_two_tasks}
\setlength{\tabcolsep}{2pt}
\small
\begin{tabularx}{\linewidth}{@{\hspace{2pt}}l@{\hspace{2pt}} *{2}{>{\centering\arraybackslash}X >{\centering\arraybackslash}X >{\centering\arraybackslash}X}@{}}
\toprule
\begin{tabular}[t]{@{}l@{}}\textbf{Method}\\\end{tabular} &
\multicolumn{3}{c}{\textbf{GSM8K}} &
\multicolumn{3}{c}{\textbf{MATH}} \\
\cmidrule(lr){2-4}\cmidrule(lr){5-7}
& {\ding{51}} & {\ding{53}} & {$\Delta$} 
& {\ding{51}} & {\ding{53}} & {$\Delta$} \\
\midrule
FedAvg-Full & 35.0 & 32.3 & $\downarrow$2.7 & 4.5  & 4.1 & $\downarrow$ 0.4 \\
\cmidrule{1-7}
FedIT     &  31.3 &  28.2 & $\downarrow$3.1 & 4.7  & 4.0  & $\downarrow$ 0.7  \\
FFA-LoRA  & 28.0  &  26.2& $\downarrow$1.8  & 3.8 & 3.8 & \textbf{$\downarrow$ 0.0}  \\
LoRA-Fair &  29.3 &  28.5 & \textbf{$\downarrow$ 0.8} &  4.1&  3.8 & $\downarrow$ 0.3 \\
FLoRA       & 39.8 &  37.2& $\downarrow$ 2.6 & 4.6 &  4.0 & $\downarrow$ 0.6 \\
FR-LoRA   & 44.3  & 39.2 & $\downarrow$ 5.1  & 5.2  & 4.3 & $\downarrow$ 0.9  \\
\midrule
\methodname{}$^-$   & 41.7 &  38.5 & $\downarrow$3.2  & 5.6 & 4.9 & $\downarrow$ 0.7  \\
\methodname{} & \textbf{44.9} & \textbf{41.9} & $\downarrow$2.7 & \textbf{5.7} & \textbf{5.1} & $\downarrow$ 0.6 \\
\bottomrule
\end{tabularx}
\end{table}

\paragraph{Results and discussion.}
We report IID  and non-IID ($\alpha=0.5$) performance and use their gap as a simple sensitivity metric: smaller $\Delta$ indicates lower sensitivity to non-IIDness.

\textbf{NLU (GLUE) and vision (DomainNet).}
Across GLUE (Table~\ref{tab:glue}) and DomainNet (Table~\ref{tab:domainnet}), federated LoRA baselines generally show much larger robustness gaps than FedAvg-Full, whose minimal degradation suggests that full-space optimization/aggregation can remain stable under these shifts. Several LoRA variants incur large drops (e.g., CoLA and multiple DomainNet domains), consistent with \emph{update-space mismatch}: clients adapt in restricted low-rank directions that become misaligned across clients, and aggregation amplifies off-subspace components. Methods that constrain the update space more aggressively (e.g., FFA-LoRA, LoRA-Fair) reduce $\Delta$ in some settings but at the cost of lower IID accuracy, indicating reduced expressivity. \methodname{} improves this trade-off, matching or improving IID performance while maintaining consistently small $\Delta$ across tasks/domains. The ablation \methodname$^{-}$ boosts IID accuracy but collapses under data heterogeneity (notably on DomainNet), showing that dynamic subspaces alone are insufficient---\emph{drift-robust state synchronization} is essential.

\textbf{NLG (LLM fine-tuning).}
Robustness gaps are smaller on GSM8K and MATH (Table~\ref{tab:llm_two_tasks}), suggesting the pretrained Llama-2 backbone provides stronger invariances. Local-adaptive methods (FLoRA/FR-LoRA/\methodname$^{-}$) also achieve higher IID performance than static LoRA baselines, indicating that reasoning tasks benefit from greater optimization plasticity. However, local adaptivity increases sensitivity when optimizer states drift across clients. \methodname{} retains strong IID performance while reducing the non-IID drop, consistent with combining gradient-subspace exploration with drift-robust second-moment synchronization.

\textbf{Practical takeaway.}
FedAvg-Full is the most robust but costly. Federated LoRA is efficient, but can be brittle. \methodname{} bridges this gap, delivering near-FFT robustness with PEFT-level efficiency, especially when local adaptivity is needed.

\section{Conclusions and Future Work}
We present \methodname{}, a robust protocol for federated parameter-efficient fine-tuning under non-IID data. Our W.H.P. analysis attributes federated LoRA brittleness to two coupled mismatches—update-space and optimizer-state—and motivates combining GaLore-style gradient-subspace optimization with drift-robust synchronization of projected second-moment statistics, delivering consistent gains across NLU, vision, and LLM benchmarks. Beyond improved accuracy, \methodname{} yields a practical stability–efficiency trade-off: near full fine-tuning robustness without transmitting dense optimizer states, enabled by low-rank projected synchronization and lightweight server-side filtering. Future work includes extending the analysis to broader participation regimes and develop a tighter end-to-end W.H.P. guarantees.

\section*{Impact Statement}

This paper presents work whose goal is to advance the field of Machine
Learning. There are many potential societal consequences of our work, none
which we feel must be specifically highlighted here.

\bibliography{references}
\bibliographystyle{icml2026}

\newpage
\appendix
\onecolumn

\section{Details of Local Training Operator $\mathcal{T}$}\label{appendix:local_training_operator}

\begin{algorithm}[h]
\caption{Local Training Operator $\mathcal{T}$ (Stochastic Gradient Descent)}
\label{alg:local-training-sgd}
\begin{algorithmic}[1]
\STATE \textbf{Input:} global model $\bar{\theta}^k$, steps $T$, stepsize $\eta$
\STATE \textbf{Output:} local model $\theta^{i,k}_T$

\STATE Initialize $\theta^{i,k}_0 \gets \bar{\theta}^k$

\FOR{$t = 0$ \textbf{to} $T-1$}
    \STATE Sample minibatch $\mathcal{B}^{i,k}_t \sim \mathcal D_i$
    \STATE Compute stochastic gradient
    $
        g^{i,k}_t = \nabla \ell(\theta^{i,k}_t;\,\mathcal{B}^{i,k}_t)
    $

    \STATE Model update:
    $
        \theta^{i,k}_{t+1}
        =
        \theta^{i,k}_t
        -
        \eta\, g^{i, k}_t
    $
\ENDFOR

\STATE Send $\theta^{i,k}_T$ to the server
\end{algorithmic}
\end{algorithm}

\begin{algorithm}[h]
\caption{Local Training Operator $\mathcal{T}$ (SGD with Momentum)}
\label{alg:local-training-sgdm}
\begin{algorithmic}[1]
\STATE \textbf{Input:} global model $\bar{\theta}^k$, local optimizer state $S^{i,k}_0$, steps $T$, stepsize $\eta$, momentum factor $\mu$
\STATE \textbf{Output:} local model $\theta^{i,k}_T$, local state $S^{i,k}_T$

\STATE Initialize $\theta^{i,k}_0 \gets \bar{\theta}^k$
\STATE Initialize momentum buffer $v^{i,k}_0 \gets S^{i,k}_0$ \COMMENT{Initialize to $\mat{0}$ if $k=0$}

\FOR{$t = 0$ \textbf{to} $T-1$}
    \STATE Sample minibatch $\mathcal{B}^{i,k}_t \sim \mathcal D_i$
    \STATE Compute stochastic gradient
    $
        g^{i,k}_t = \nabla \ell(\theta^{i,k}_t;\,\mathcal{B}^{i,k}_t)
    $

    \STATE Update momentum buffer:
    $
        v^{i,k}_{t+1}
        =
        \mu\, v^{i,k}_t
        +
        g^{i, k}_t
    $

    \STATE Model update:
    $
        \theta^{i,k}_{t+1}
        =
        \theta^{i,k}_t
        -
        \eta\, v^{i,k}_{t+1}
    $
\ENDFOR

\STATE Set $S^{i,k}_T = v^{i,k}_T$
\STATE Send $(\theta^{i,k}_T,\ S^{i,k}_T)$ to the server
\end{algorithmic}
\end{algorithm}

\begin{algorithm}[!h]
\caption{Local Training Operator $\mathcal{T}$ (Adam \cite{kingma2014adam}/ AdamW \cite{loshchilov2017adamw})}
\label{alg:local-training-adamw}
\begin{algorithmic}[1]
\STATE \textbf{Input:} global model $\bar{\theta}^k$, local optimizer state $S^{i,k}_0$, steps $T$, stepsize $\eta$, optimizer $\mathcal{O}$, momentum parameters $(\beta_1,\beta_2)$, numerical constant $\varepsilon$
\STATE \textbf{Output:} local model $\theta^{i,k}_T$, local state $S^{i,k}_T$

\STATE Initialize $\theta^{i,k}_0 \gets \bar{\theta}^k$
\STATE Initialize first and second momentum $m^{i,k}_0, v^{i,k}_0 \gets S^{i,k}_0 $

\FOR{$t = 0$ \textbf{to} $T-1$}
    \STATE Sample minibatch $\mathcal{B}^{i,k}_t \sim \mathcal D_i$
    \STATE Compute stochastic gradient
    $
        g^{i,k}_t = \nabla \ell(\theta^{i,k}_t;\,\mathcal{B}^{i,k}_t)
    $

    \STATE Update first momentum:
        $m^{i,k}_{t+1}
        =
        \beta_1 m^{i,k}_t
        +
        (1-\beta_1) g^{i, k}_t$

    \STATE Update second moment:
    $
        v^{i,k}_{t+1}
        =
        \beta_2 v^{i,k}_t
        +
        (1-\beta_2)\, g^{i, k}_t \odot g^{i, k}_t
    $

    \STATE Model update:
    $
        \theta^{i,k}_{t+1}
        =
        \theta^{i,k}_t
        -
        \eta
        \frac{
             m^{i,k}_{t+1}
        }{
            \sqrt{v^{i,k}_{t+1}} + \varepsilon
        }
    $

\ENDFOR

\STATE Set $S^{i,k}_T = ( m^{i,k}_T,  v^{i,k}_T)$;
\STATE Send 
 $(\theta^{i,k}_T,\ S^{i,k}_T)$ to the server
\end{algorithmic}
\end{algorithm}

\subsection{Details of GaLore-AdamW}
\label{app:galore_adamw}

This section summarizes the GaLoreAdamW update used by \methodname{} and clarifies the tensor shapes under the \textbf{default} setting (\texttt{proj\_type=std}, one-sided projection) for an adapted weight block $\mat{W}\in\mathbb{R}^{m\times n}$ (square case $m{=}n$ is a special case).

\paragraph{Default projection rule and shapes.}
GaLore maintains a rank-$r$ projector that is refreshed every \texttt{update\_proj\_gap} steps (denoted $\tau$ in the paper). Under \texttt{proj\_type=std}, the projector chooses the side based on the aspect ratio:
\begin{itemize}[leftmargin=*,noitemsep,topsep=0pt]
\item If $m\ge n$ (including square blocks), use a \textbf{right} basis $\mat{P}\in\mathbb{R}^{r\times n}$ and project
$\tilde{\mat{g}}=\mat{g}\mat{P}^\top\in\mathbb{R}^{m\times r}$.
\item If $m<n$, use a \textbf{left} basis $\mat{P}\in\mathbb{R}^{m\times r}$ and project
$\tilde{\mat{g}}=\mat{P}^\top\mat{g}\in\mathbb{R}^{r\times n}$.
\end{itemize}
For our exposition with square attention/MLP blocks $\mat{W}\in\mathbb{R}^{n\times n}$, the default is the \textbf{right} case, hence $\mat{P}\in\mathbb{R}^{r\times n}$ and $\tilde{\mat{g}}\in\mathbb{R}^{n\times r}$.

\paragraph{Projected AdamW state (no lifting).}
GaLoreAdamW stores the AdamW buffers in the \emph{same shape as the projected gradient}:
\[
\tilde{\mat{m}}_t \in \mathbb{R}^{m\times r},\quad
\tilde{\mat{v}}_t \in \mathbb{R}^{m\times r}\qquad (\text{right projection, }m\ge n),
\]
(and analogously $\tilde{\mat{m}}_t,\tilde{\mat{v}}_t\in\mathbb{R}^{r\times n}$ for left projection).
Here $\tilde{\mat{v}}_t$ denotes the \emph{elementwise} second-moment buffer reshaped to the projected tensor shape. Importantly, the buffers are updated in this projected space without forming any dense $m\times n$ second-moment matrix.

\paragraph{GaLoreAdamW update (right projection case).}
Let $\mat{g}_t=\nabla_{\mat{W}}\ell(\theta;\mathcal{B}_t)\in\mathbb{R}^{m\times n}$ be the dense gradient for the block and $\tilde{\mat{g}}_t=\mat{g}_t\mat{P}_t^\top\in\mathbb{R}^{m\times r}$ its projected version. GaLoreAdamW performs standard AdamW updates in the projected shape:
\begin{align*}
\tilde{\mat{m}}_{t} &\leftarrow \beta_1 \tilde{\mat{m}}_{t-1} + (1-\beta_1)\tilde{\mat{g}}_t,\\
\tilde{\mat{v}}_{t} &\leftarrow \beta_2 \tilde{\mat{v}}_{t-1} + (1-\beta_2)(\tilde{\mat{g}}_t\odot \tilde{\mat{g}}_t),\\
\tilde{\mat{d}}_t &\leftarrow \sqrt{\tilde{\mat{v}}_t}+\epsilon, \qquad
\tilde{\mat{u}}_t \leftarrow \tilde{\mat{m}}_t \oslash \tilde{\mat{d}}_t,
\end{align*}
optionally with bias correction (controlled by \texttt{correct\_bias}). The projected, preconditioned update $\tilde{\mat{u}}_t\in\mathbb{R}^{m\times r}$ is then \emph{projected back} to the ambient shape:
\[
\mat{u}_t \;=\; \tilde{\mat{u}}_t\,\mat{P}_t \in \mathbb{R}^{m\times n},
\]
and the parameter update is applied in the ambient space:
\[
\mat{W}\leftarrow \mat{W} - \eta_t\,\mat{u}_t \;-\; \eta_t\,\lambda\,\mat{W},
\]
where $\eta_t$ includes the AdamW step size (with bias correction when enabled) and $\lambda$ is the decoupled weight decay.

\paragraph{Projector refresh and buffer reprojection (stay low-rank).}
When the projector is refreshed (every $\tau$ steps), GaLoreAdamW reprojects the stored buffers to remain consistent with the new basis, without lifting to $m\times n$ in the common right$\rightarrow$right case.
For right projection, let $\mat{P}_{\text{old}},\mat{P}_{\text{new}}\in\mathbb{R}^{r\times n}$. Define their corresponding column-space bases $\mat{V}_{\text{old}}=\mat{P}_{\text{old}}^\top\in\mathbb{R}^{n\times r}$ and $\mat{V}_{\text{new}}=\mat{P}_{\text{new}}^\top\in\mathbb{R}^{n\times r}$. GaLoreAdamW applies the low-rank change-of-basis transform
\[
\tilde{\mat{m}} \leftarrow \tilde{\mat{m}}\,(\mat{V}_{\text{old}}^\top \mat{V}_{\text{new}}), \qquad
\tilde{\mat{v}} \leftarrow \tilde{\mat{v}}\,(\mat{V}_{\text{old}}^\top \mat{V}_{\text{new}}),
\]
which maps the $m\times r$ buffers to the new projected coordinates. For square blocks under \texttt{proj\_type=std}, the side remains ``right'' across refreshes, so this low-rank reprojection avoids materializing dense $m\times n$ tensors. (A rare fallback path lifts and reprojects only if the projection \emph{side} changes.)

\paragraph{Seeded random-adaptive projectors.}
Under \texttt{proj\_type=random-adaptive} or \texttt{rsvd-adaptive}, the projector uses data-driven bases for the first $S$ refreshes (\texttt{adaptive\_proj\_steps}), then switches to a seeded random orthonormal basis. In the random phase, the basis is fully determined by an integer seed; thus synchronizing the projector across clients requires broadcasting only the seed (not the basis matrix), while the projected buffers $\tilde{\mat{m}},\tilde{\mat{v}}$ remain in the same $m\times r$ (or $r\times n$) shape and are reprojected as above when the seed increments.

\paragraph{Summary of shapes (square block, default).}
For $\mat{W}\in\mathbb{R}^{n\times n}$ and default \texttt{proj\_type=std} (right projection):
\[
\mat{P}\in\mathbb{R}^{r\times n},\quad
\tilde{\mat{g}},\tilde{\mat{m}},\tilde{\mat{v}},\tilde{\mat{u}}\in\mathbb{R}^{n\times r},\quad
\mat{u}\in\mathbb{R}^{n\times n}.
\]
Thus, GaLoreAdamW maintains adaptive states in $\mathcal{O}(nr)$ memory per adapted block and applies dense parameter updates via \texttt{project\_back} without storing dense second-moment matrices.

\newpage

\section{Proofs} \label{app:proofs_state}

\subsection{Conditional sub-Gaussian noise and a W.H.P.\ envelope}\label{app:noise_env}

\begin{definition}[Conditional sub-Gaussian vector]\label{def:subg}
A random vector $X\in\mathbb{R}^d$ is \emph{conditionally $\sigma$-sub-Gaussian given $\mathcal{F}$} if
$\mathbb{E}[X\mid\mathcal{F}]=0$ and for all $\lambda\in\mathbb{R}$ and all unit vectors $u\in\mathbb{R}^d$,
\[
\mathbb{E}\!\left[\exp\!\big(\lambda\langle u,X\rangle\big)\,\middle|\,\mathcal{F}\right]
\le \exp\!\left(\frac{\lambda^2\sigma^2}{2}\right).
\]
\end{definition}

\begin{lemma}[W.H.P.\ noise envelope]\label{lem:noise_env}
Assume the mini-batch noise $\xi_t^{i,k}$ is conditionally mean-zero and $\sigma$-sub-Gaussian given $\mathcal{F}_{k,t-1}$
in the sense of Definition~\ref{def:subg}. Then for any $\delta\in(0,1)$, with probability at least $1-\delta$,
\begin{equation}\label{eq:noise_env_app}
\max_{0\le k\le K-1}\max_{0\le t\le T-1}\max_{i\in[M]}\|\xi_t^{i,k}\|_2
\;\le\;
\varepsilon_{\mathrm{noise}}(\delta)
\;\coloneqq\;
\sigma\sqrt{2d\log\!\Big(\frac{2dMKT}{\delta}\Big)}.
\end{equation}
\end{lemma}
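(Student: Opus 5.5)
The plan is to reduce the vector bound to scalar coordinate tail bounds and then take a union bound over all indices. Fix a triple $(i,k,t)$ and write $X=\xi_t^{i,k}\in\mathbb{R}^d$. For each standard basis vector $e_j$, $j\in[d]$, Definition~\ref{def:subg} with $u=e_j$ gives the conditional moment generating function bound $\mathbb{E}[\exp(\lambda X_j)\mid\mathcal{F}_{k,t-1}]\le\exp(\lambda^2\sigma^2/2)$ for all $\lambda\in\mathbb{R}$. The standard Chernoff argument (Markov on $e^{\lambda X_j}$, optimize at $\lambda=s/\sigma^2$, and the same for $-X_j$) then yields
\[
\mathbb{P}\big(|X_j|>s\,\big|\,\mathcal{F}_{k,t-1}\big)\le 2\exp\!\Big(-\frac{s^2}{2\sigma^2}\Big).
\]
Taking expectations removes the conditioning, so the same bound holds unconditionally. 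The martingale structure plays no role here: only a marginal tail bound for each noise vector is needed.

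Next I pass from coordinates to the Euclidean norm via $\|X\|_2\le\sqrt d\,\|X\|_\infty$. The event $\|X\|_2>\sqrt d\,s$ is contained in $\bigcup_{j}\{|X_j|>s\}$, so
\[
\mathbb{P}\big(\|X\|_2>\sqrt d\,s\big)\le 2d\exp\!\Big(-\frac{s^2}{2\sigma^2}\Big).
\]
Now take a union bound over all $MKT$ triples $(i,k,t)$ with $i\in[M]$, $0\le k\le K-1$ and $0\le t\le T-1$. This bounds the failure probability by $2dMKT\exp(-s^2/(2\sigma^2))$. Setting this equal to $\delta$ gives $s=\sigma\sqrt{2\log(2dMKT/\delta)}$, hence $\sqrt d\,s=\sigma\sqrt{2d\log(2dMKT/\delta)}=\varepsilon_{\mathrm{noise}}(\delta)$, which matches \eqref{eq:noise_env_app} exactly.

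No step is a real obstacle, since the constant in the statement is precisely what this crude coordinate-wise union bound produces. The only point needing care is to check that the conditional sub-Gaussian definition indeed gives two-sided scalar tails for each fixed coordinate direction, and that the unconditional bound follows by the tower property. Dependence across rounds and steps is irrelevant because the union bound needs no independence. A sharper $\sqrt d+\sqrt{\log(1/\delta)}$ bound via an $\varepsilon$-net is possible but unnecessary.
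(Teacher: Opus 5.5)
Your proposal is correct and follows essentially the same route as the paper's proof: a conditional sub-Gaussian tail for each coordinate, removal of the conditioning by taking expectations, a union bound over $(i,k,t,j)$, and the passage $\|\xi\|_2\le\sqrt d\,\|\xi\|_\infty$. With the choice $s=\sigma\sqrt{2\log(2dMKT/\delta)}$ this yields exactly $\varepsilon_{\mathrm{noise}}(\delta)$.
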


\begin{proof}
Fix indices $(i,k,t)$ and a coordinate $j\in[d]$. By Definition~\ref{def:subg} with $u=e_j$, the scalar
$Z\coloneqq [\xi_t^{i,k}]_j$ is conditionally $\sigma$-sub-Gaussian given $\mathcal{F}_{k,t-1}$, hence the standard tail bound
holds conditionally:
\[
\mathbb{P}\!\left(|Z|\ge u \,\middle|\, \mathcal{F}_{k,t-1}\right)\le 2\exp\!\left(-\frac{u^2}{2\sigma^2}\right)\quad \forall u\ge 0.
\]
Taking expectations over $\mathcal{F}_{k,t-1}$ yields the same unconditional bound
$\mathbb{P}(|Z|\ge u)\le 2\exp(-u^2/(2\sigma^2))$.

Union bound over all $(i,k,t,j)\in [M]\times\{0,\dots,K-1\}\times\{0,\dots,T-1\}\times[d]$ gives
\[
\mathbb{P}\!\left(\max_{i,k,t}\|\xi_t^{i,k}\|_\infty \ge u\right)
\le 2dMKT\exp\!\left(-\frac{u^2}{2\sigma^2}\right).
\]
Set $u=\sigma\sqrt{2\log(2dMKT/\delta)}$ so the right-hand side is at most $\delta$.
On this event, for all $(i,k,t)$,
\[
\|\xi_t^{i,k}\|_2 \le \sqrt{d}\,\|\xi_t^{i,k}\|_\infty \le \sqrt{d}\,u
= \sigma\sqrt{2d\log\!\Big(\frac{2dMKT}{\delta}\Big)}.
\]
This proves~\eqref{eq:noise_env_app}.
\end{proof}

\subsection{Proof of Theorem~\ref{thm:radius_local}}\label{app:proof_radius_local}

Throughout this proof, we work on the event
\[
\mathcal{E}_{\mathrm{noise}}(\delta)\;\coloneqq\;
\left\{\max_{k,t,i}\|\xi_t^{i,k}\|_2 \le \varepsilon_{\mathrm{noise}}(\delta)\right\},
\]
which holds with probability at least $1-\delta$ by Lemma~\ref{lem:noise_env}.
For readability, write $\varepsilon\coloneqq \varepsilon_{\mathrm{noise}}(\delta)$.

We fix a round $k$ and suppress the superscript $k$; the bounds are uniform over $k$ because
$\mathcal{E}_{\mathrm{noise}}(\delta)$ already union-bounds over all rounds.

\paragraph{Setup.}
Fix a client $i$. Let $\{\theta_t\}_{t=0}^T$ be the client trajectory and $\{\theta_t^\star\}_{t=0}^T$ the reference trajectory,
with common initialization $\theta_0=\theta_0^\star=\bar\theta^k$.
Define deviations
\[
\Delta_t \coloneqq \theta_t-\theta_t^\star,\qquad \Delta_0=0.
\]
Assume $f=\frac1M\sum_{i=1}^M F_i$ is $L$-smooth on $Q$ and gradients are clipped/bounded on $Q$ so
$\|\nabla F_i(\theta)\|_2\le G$ for all $i$ and $\theta\in Q$. Then $\|\nabla f(\theta)\|_2\le G$ and
\begin{equation}\label{eq:drift_2G}
\|\nabla F_i(\theta)-\nabla f(\theta)\|_2 \le 2G,\qquad \forall \theta\in Q.
\end{equation}
We also assume the reference iterates $\theta_t^\star\in Q$ and the client iterates remain in $Q$ on the success event;
all inequalities below are conditioned on being in $Q$ where smoothness/boundedness holds.

\subsubsection*{Case 1: SGD}

The client update is
\[
\theta_{t+1}=\theta_t-\eta\big(\nabla F_i(\theta_t)+\xi_t\big),
\qquad
\theta^\star_{t+1}=\theta^\star_t-\eta\nabla f(\theta^\star_t).
\]
Subtracting yields
\begin{align}
\Delta_{t+1}
&=\Delta_t-\eta\big(\nabla f(\theta_t)-\nabla f(\theta^\star_t)\big)
-\eta\big(\nabla F_i(\theta_t)-\nabla f(\theta_t)\big)-\eta\xi_t. \label{eq:sgd_delta}
\end{align}
Taking norms and using $L$-smoothness of $f$ and~\eqref{eq:drift_2G}, on $\mathcal{E}_{\mathrm{noise}}(\delta)$ we obtain
\[
\|\Delta_{t+1}\|_2 \le (1+\eta L)\|\Delta_t\|_2 + \eta(2G+\varepsilon).
\]
Unrolling from $\Delta_0=0$ gives
\[
\|\Delta_t\|_2
\le \eta(2G+\varepsilon)\sum_{s=0}^{t-1}(1+\eta L)^{t-1-s}
=
\eta(2G+\varepsilon)\frac{(1+\eta L)^t-1}{\eta L}.
\]
Using $(1+\eta L)^t\le e^{\eta Lt}$ and $e^x-1\le xe^x$, we get
\[
\|\Delta_t\|_2 \le (2G+\varepsilon)\cdot \frac{e^{\eta Lt}-1}{L}
\le (2G+\varepsilon)\cdot \frac{\eta Lt\,e^{\eta Lt}}{L}
= \eta t(2G+\varepsilon)e^{\eta Lt}.
\]
Under the step-size condition $\eta LT\le \tfrac12$, we have $e^{\eta Lt}\le e^{1/2}\le 2$ for all $t\le T$, hence
\begin{equation}\label{eq:sgd_final}
\max_{t\le T}\|\Delta_t\|_2 \le 2\eta T(2G+\varepsilon) = 4\eta TG + 2\eta T\varepsilon.
\end{equation}
This matches the stated $\mathcal{R}_{\mathrm{drift}}$ for SGD up to absolute constants.

\subsubsection*{Case 2: Momentum}

Momentum recursions (one round) are
\[
m_{t+1}=\beta_1 m_t+(1-\beta_1)\big(\nabla F_i(\theta_t)+\xi_t\big),
\qquad
\theta_{t+1}=\theta_t-\eta m_{t+1},
\]
and for the reference
\[
m_{t+1}^\star=\beta_1 m_t^\star+(1-\beta_1)\nabla f(\theta_t^\star),
\qquad
\theta_{t+1}^\star=\theta_t^\star-\eta m_{t+1}^\star.
\]
Define $\Delta^m_t\coloneqq m_t-m_t^\star$ and note $\|\Delta^m_0\|_2=\|m_0^{i,k}-m_0^{\star,k}\|_2\le B_m$
by Definition~\ref{def:state_bias_compact}. Subtracting the $m$-updates gives
\begin{equation}\label{eq:mom_dm}
\Delta^m_{t+1}
=\beta_1\Delta^m_t+(1-\beta_1)\Big(\nabla F_i(\theta_t)-\nabla f(\theta_t^\star)+\xi_t\Big).
\end{equation}
Decompose $\nabla F_i(\theta_t)-\nabla f(\theta_t^\star)
=\big(\nabla f(\theta_t)-\nabla f(\theta_t^\star)\big)+\big(\nabla F_i(\theta_t)-\nabla f(\theta_t)\big)$ and use
$L$-smoothness plus~\eqref{eq:drift_2G} and $\|\xi_t\|_2\le\varepsilon$ on $\mathcal{E}_{\mathrm{noise}}(\delta)$ to obtain
\begin{equation}\label{eq:mom_dm_bound}
\|\Delta^m_{t+1}\|_2
\le \beta_1\|\Delta^m_t\|_2+(1-\beta_1)\big(L\|\Delta_t\|_2+2G+\varepsilon\big).
\end{equation}
Now express $\Delta_t$ in terms of $\Delta^m$:
\[
\Delta_{t+1}=\Delta_t-\eta\Delta^m_{t+1}
\quad\Rightarrow\quad
\Delta_t=-\eta\sum_{r=0}^{t-1}\Delta^m_{r+1}.
\]
Using the closed-form solution of~\eqref{eq:mom_dm},
\[
\Delta^m_{r+1}=\beta_1^{r+1}\Delta^m_0 + (1-\beta_1)\sum_{s=0}^{r}\beta_1^{r-s} h_s,
\qquad
h_s\coloneqq \nabla F_i(\theta_s)-\nabla f(\theta_s^\star)+\xi_s,
\]
we obtain (by summation and swapping sums) the identity
\[
\Delta_t
=
-\eta\sum_{r=0}^{t-1}\beta_1^{r+1}\Delta^m_0
-\eta\sum_{s=0}^{t-1} w_{t,s}\,h_s,
\qquad
w_{t,s}\coloneqq (1-\beta_1)\sum_{r=s}^{t-1}\beta_1^{r-s}=1-\beta_1^{t-s}\in[0,1].
\]
Therefore,
\begin{equation}\label{eq:mom_delta_convolution}
\|\Delta_t\|_2
\le
\underbrace{\eta\sum_{r=0}^{t-1}\beta_1^{r+1}\|\Delta^m_0\|_2}_{\le \eta B_m/(1-\beta_1)}
+\eta\sum_{s=0}^{t-1}\|h_s\|_2.
\end{equation}
Finally, bound $\|h_s\|_2$ as
\[
\|h_s\|_2
\le
\|\nabla f(\theta_s)-\nabla f(\theta_s^\star)\|_2+\|\nabla F_i(\theta_s)-\nabla f(\theta_s)\|_2+\|\xi_s\|_2
\le L\|\Delta_s\|_2+2G+\varepsilon.
\]
Plugging into~\eqref{eq:mom_delta_convolution} yields, for all $t\le T$,
\begin{equation}\label{eq:mom_gronwall}
\|\Delta_t\|_2
\le
\frac{\eta}{1-\beta_1}B_m
+\eta t(2G+\varepsilon)
+\eta L\sum_{s=0}^{t-1}\|\Delta_s\|_2.
\end{equation}
We apply the standard discrete Gr\"onwall inequality: if $a_t\ge 0$ satisfies
$a_t \le A + Bt + C\sum_{s=0}^{t-1} a_s$, then $a_t\le (A+Bt)e^{Ct}$.
Applying this to $a_t=\|\Delta_t\|_2$ with $A=\frac{\eta}{1-\beta_1}B_m$, $B=\eta(2G+\varepsilon)$, $C=\eta L$ gives
\[
\|\Delta_t\|_2 \le \left(\frac{\eta}{1-\beta_1}B_m+\eta t(2G+\varepsilon)\right)e^{\eta Lt}.
\]
Under $\eta LT\le \tfrac12$, we have $e^{\eta Lt}\le 2$ for $t\le T$, hence
\begin{equation}\label{eq:mom_final}
\max_{t\le T}\|\Delta_t\|_2
\le
2\cdot \frac{\eta}{1-\beta_1}B_m + 2\eta T(2G+\varepsilon)
=
\frac{2\eta}{1-\beta_1}B_m + 4\eta TG + 2\eta T\varepsilon.
\end{equation}
This matches the theorem’s momentum bound up to absolute constants.

\subsubsection*{Case 3: AdamW with preconditioner $(v+\epsilon\mathbf{1})^{-1/2}$}

We analyze the AdamW-style update (bias corrections and weight decay omitted for clarity; they only affect constants).
Let $g_t=\nabla F_i(\theta_t)+\xi_t$ (client) and $g_t^\star=\nabla f(\theta_t^\star)$ (reference), with recursions
\[
m_{t+1}=\beta_1 m_t+(1-\beta_1)g_t,\quad
v_{t+1}=\beta_2 v_t+(1-\beta_2)g_t^{\odot 2},\quad
\theta_{t+1}=\theta_t-\eta\,m_{t+1}\odot(v_{t+1}+\epsilon\mathbf{1})^{-1/2},
\]
and similarly for $(m_t^\star,v_t^\star,\theta_t^\star)$ with $g_t^\star$.

\paragraph{Step 1: a sensitivity bound for the preconditioner.}
For scalars $a,b\ge 0$,
\begin{equation}\label{eq:inv_sqrt_lip}
\left|\frac{1}{\sqrt{a+\epsilon}}-\frac{1}{\sqrt{b+\epsilon}}\right|
\le \frac{|a-b|}{2\epsilon^{3/2}},
\qquad
\left|\frac{1}{\sqrt{a+\epsilon}}-\frac{1}{\sqrt{b+\epsilon}}\right|
\le \frac{1}{\sqrt{\epsilon}}.
\end{equation}
The first inequality follows from the mean value theorem and $|(d/dx)(x+\epsilon)^{-1/2}|\le (2\epsilon^{3/2})^{-1}$, while
the second holds since $(a+\epsilon)^{-1/2}\in(0,\epsilon^{-1/2}]$.

\paragraph{Step 2: bounding the update-direction difference.}
Define the update directions
\[
u_{t+1}\coloneqq m_{t+1}\odot(v_{t+1}+\epsilon\mathbf{1})^{-1/2},
\qquad
u_{t+1}^\star\coloneqq m_{t+1}^\star\odot(v_{t+1}^\star+\epsilon\mathbf{1})^{-1/2}.
\]
Then $\Delta_{t+1}=\Delta_t-\eta(u_{t+1}-u_{t+1}^\star)$, so
\[
\|\Delta_t\|_2 \le \eta\sum_{s=0}^{t-1}\|u_{s+1}-u_{s+1}^\star\|_2.
\]
Add and subtract $m_{t+1}^\star\odot(v_{t+1}+\epsilon)^{-1/2}$ to get
\[
u_{t+1}-u_{t+1}^\star
=
(m_{t+1}-m_{t+1}^\star)\odot(v_{t+1}+\epsilon)^{-1/2}
+
m_{t+1}^\star\odot\left((v_{t+1}+\epsilon)^{-1/2}-(v_{t+1}^\star+\epsilon)^{-1/2}\right).
\]
Using $\|a\odot b\|_2\le \|a\|_2\|b\|_\infty$ and $\|(v_{t+1}+\epsilon)^{-1/2}\|_\infty\le \epsilon^{-1/2}$ yields
\begin{equation}\label{eq:u_diff_basic}
\|u_{t+1}-u_{t+1}^\star\|_2
\le
\frac{1}{\sqrt{\epsilon}}\|m_{t+1}-m_{t+1}^\star\|_2
+
\|m_{t+1}^\star\|_2\cdot \left\|(v_{t+1}+\epsilon)^{-1/2}-(v_{t+1}^\star+\epsilon)^{-1/2}\right\|_\infty.
\end{equation}

On $\mathcal{E}_{\mathrm{noise}}(\delta)$, $\|g_t\|_2\le G+\varepsilon$ and $\|g_t^\star\|_2\le G$.
By induction on the EMA recursions (starting from bounded states at round start), we have $\|m_{t}^\star\|_2\le G$ for all $t$.
(If $m_0^\star=0$ this is immediate; more generally it holds whenever $m_0^\star$ is produced by the same EMA under bounded gradients.)

\paragraph{Step 3: isolating the effect of \emph{initial} state mismatch.}
Let $\Delta_0^m\coloneqq m_0-m_0^\star$ and $\Delta_0^v\coloneqq v_0-v_0^\star$, with
$\|\Delta_0^m\|_2\le B_m$ and $\|\Delta_0^v\|_2\le B_v$.

\emph{(i) First moment.}
Unrolling the $m$-recursion gives
\[
m_{t+1}-m_{t+1}^\star
=
\beta_1^{t+1}\Delta_0^m
+
(1-\beta_1)\sum_{s=0}^{t}\beta_1^{t-s}(g_s-g_s^\star).
\]
On $\mathcal{E}_{\mathrm{noise}}(\delta)$, $\|g_s-g_s^\star\|_2\le \|g_s\|_2+\|g_s^\star\|_2\le 2G+\varepsilon$,
and $(1-\beta_1)\sum_{s=0}^t\beta_1^{t-s}\le 1$, hence
\begin{equation}\label{eq:m_diff_bound}
\|m_{t+1}-m_{t+1}^\star\|_2
\le \beta_1^{t+1}B_m + (2G+\varepsilon).
\end{equation}

\emph{(ii) Second moment.}
Similarly,
\[
v_{t+1}-v_{t+1}^\star
=
\beta_2^{t+1}\Delta_0^v
+
(1-\beta_2)\sum_{s=0}^{t}\beta_2^{t-s}\big(g_s^{\odot 2}-(g_s^\star)^{\odot 2}\big).
\]
We bound the preconditioner difference by separating the contribution of $\beta_2^{t+1}\Delta_0^v$ from the remaining term.
Let $r_{t+1}\coloneqq (1-\beta_2)\sum_{s=0}^{t}\beta_2^{t-s}\big(g_s^{\odot 2}-(g_s^\star)^{\odot 2}\big)$ so that
$v_{t+1}=v_{t+1}^\star+\beta_2^{t+1}\Delta_0^v+r_{t+1}$.
Then by triangle inequality and~\eqref{eq:inv_sqrt_lip},
\begin{align}
\left\|(v_{t+1}+\epsilon)^{-1/2}-(v_{t+1}^\star+\epsilon)^{-1/2}\right\|_\infty
&\le
\left\|(v_{t+1}^\star+r_{t+1}+\beta_2^{t+1}\Delta_0^v+\epsilon)^{-1/2}-(v_{t+1}^\star+r_{t+1}+\epsilon)^{-1/2}\right\|_\infty \nonumber\\
&\quad+
\left\|(v_{t+1}^\star+r_{t+1}+\epsilon)^{-1/2}-(v_{t+1}^\star+\epsilon)^{-1/2}\right\|_\infty \nonumber\\
&\le
\frac{\beta_2^{t+1}\|\Delta_0^v\|_\infty}{2\epsilon^{3/2}}+\frac{1}{\sqrt{\epsilon}}
\;\le\;
\frac{\beta_2^{t+1}B_v}{2\epsilon^{3/2}}+\frac{1}{\sqrt{\epsilon}}. \label{eq:precond_split}
\end{align}
(The last step uses $\|\Delta_0^v\|_\infty\le \|\Delta_0^v\|_2\le B_v$.)

\paragraph{Step 4: complete the bound.}
Combine~\eqref{eq:u_diff_basic}, $\|m_{t+1}^\star\|_2\le G$,~\eqref{eq:m_diff_bound}, and~\eqref{eq:precond_split} to obtain, for all $t$,
\[
\|u_{t+1}-u_{t+1}^\star\|_2
\le
\frac{1}{\sqrt{\epsilon}}\big(\beta_1^{t+1}B_m + (2G+\varepsilon)\big)
+
G\left(\frac{\beta_2^{t+1}B_v}{2\epsilon^{3/2}}+\frac{1}{\sqrt{\epsilon}}\right).
\]
Summing over $t=0,\dots,T-1$ yields, for all $t\le T$,
\begin{align*}
\|\Delta_t\|_2
&\le \eta\sum_{s=0}^{t-1}\|u_{s+1}-u_{s+1}^\star\|_2\\
&\le
\frac{\eta}{\sqrt{\epsilon}}\left(\sum_{s=0}^{t-1}\beta_1^{s+1}\right)B_m
+
\frac{\eta G}{2\epsilon^{3/2}}\left(\sum_{s=0}^{t-1}\beta_2^{s+1}\right)B_v
+
\frac{\eta t}{\sqrt{\epsilon}}\Big( (2G+\varepsilon)+G\Big)\\
&\le
\frac{\eta}{(1-\beta_1)\sqrt{\epsilon}}\,B_m
+
\frac{\eta G}{2(1-\beta_2)\epsilon^{3/2}}\,B_v
+
\frac{\eta T}{\sqrt{\epsilon}}\big(3G+\varepsilon\big).
\end{align*}
Thus,
\begin{equation}\label{eq:adamw_final}
\max_{t\le T}\|\Delta_t\|_2
\le
\frac{\eta}{(1-\beta_1)\sqrt{\epsilon}}\,B_m
+
\frac{\eta G}{2(1-\beta_2)\epsilon^{3/2}}\,B_v
+
\frac{\eta T}{\sqrt{\epsilon}}\big(3G+\varepsilon\big).
\end{equation}
This matches the theorem’s AdamW decomposition into a drift/noise term of order $\frac{\eta T}{\sqrt{\epsilon}}(G+\varepsilon)$ and a
state term of order $\eta\left(\frac{B_m}{(1-\beta_1)\sqrt{\epsilon}}+\frac{G B_v}{(1-\beta_2)\epsilon^{3/2}}\right)$,
up to absolute constants.

\paragraph{Conclusion.}
Combining~\eqref{eq:sgd_final}, \eqref{eq:mom_final}, and \eqref{eq:adamw_final} on $\mathcal{E}_{\mathrm{noise}}(\delta)$,
and noting that Lemma~\ref{lem:noise_env} already provides a simultaneous bound over all $(i,k,t)$, yields the claimed W.H.P.\ tube bound
simultaneously over all rounds, clients, and steps.

\subsection{In-expectation refinement under bounded heterogeneity}\label{app:expect_cor}

The W.H.P.\ bound in Theorem~\ref{thm:radius_local} uses worst-case drift control induced by clipping (via $\|\nabla F_i-\nabla f\|\le 2G$).
Under the assumption of bounded heterogeneity \ref{ass:bounded_heterogeneity}, one can obtain tighter \emph{average/RMS} drift bounds and corresponding in-expectation stability/convergence results.
We state one representative corollary for SGD (extensions to stateful optimizers follow the same template but are more notational).

\begin{corollary}[In-expectation stability and convergence under bounded heterogeneity]\label{cor:expect_convergence}
Assume full participation with server averaging and local \textnormal{SGD} for $T$ steps per round:
$\bar\theta^{k+1}=\frac{1}{M}\sum_{i=1}^M \theta_T^{i,k}$.
Assume on $Q$: (i) $f$ is $L$-smooth and satisfies the PL inequality with parameter $\mu>0$; (ii) bounded heterogeneity holds:
\[
\frac1M\sum_{i=1}^M\|\nabla F_i(\theta)\|_2^2 \le H^2 + B^2\|\nabla f(\theta)\|_2^2,\quad \forall \theta\in Q;
\]
(iii) mini-batch noise is unbiased with bounded conditional second moment:
$\mathbb{E}[\xi_t^{i,k}\mid \mathcal{F}_{k,t-1}]=0$ and $\mathbb{E}[\|\xi_t^{i,k}\|_2^2\mid \mathcal{F}_{k,t-1}]\le \sigma^2$;
and (iv) $\|\nabla f(\theta)\|_2\le G$ on $Q$.
Let the within-round reference be $T$ steps of (full-batch) gradient descent on $f$:
$\theta_{t+1}^{\star,k}=\theta_{t}^{\star,k}-\eta\nabla f(\theta_t^{\star,k})$ with $\theta_0^{\star,k}=\bar\theta^k$.
If $\eta LT\le \frac{1}{6}$, then for each round $k$,
\[
\mathbb{E}\Big[\|\bar\theta^{k+1}-\theta_T^{\star,k}\|_2\ \big|\ \bar\theta^k\Big]
\;\le\;
2\eta T\left(\sqrt{H^2+(B^2-1)G^2}+\sigma\right).
\]
Moreover, defining $r_k\coloneqq \mathbb{E}[f(\bar\theta^k)-f^\star]$, there exist absolute constants $c_1,c_2>0$ such that
\[
r_{k+1}
\le
(1-\eta\mu)^T\,r_k
+
c_1\,\eta T\left(\sqrt{H^2+(B^2-1)G^2}+\sigma\right)
+
c_2\,\eta^2T^2\left(\sqrt{H^2+(B^2-1)G^2}+\sigma\right)^2.
\]
In particular, if $H=\sigma=0$ (and hence clients are effectively IID and gradients are deterministic), then $r_k$ converges
\emph{linearly} to $0$ (i.e., $\bar\theta^k$ converges in expectation to $f^\star$). More generally, with constant $\eta$ the recursion
converges linearly to a neighborhood whose radius scales with $H$ and $\sigma$, and with diminishing $\eta\to 0$ the neighborhood vanishes.
\end{corollary}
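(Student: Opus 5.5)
The plan has two parts: a per-round deviation bound in expectation, then a one-step PL descent for the reference trajectory glued to the deviation via smoothness. Fix round $k$, condition on $\bar\theta^k$, and assume (as in the proof of Theorem~\ref{thm:radius_local}) that all iterates stay in $Q$. For client $i$, set $\Delta^i_t\coloneqq\theta^{i,k}_t-\theta^{\star,k}_t$. Subtracting the two SGD recursions exactly as in \eqref{eq:sgd_delta} gives
\[
\Delta^i_{t+1}=\Delta^i_t-\eta\big(\nabla f(\theta^{i,k}_t)-\nabla f(\theta^{\star,k}_t)\big)-\eta c^{i,k}_t-\eta\xi^{i,k}_t .
\]
Averaging over $i$ gives $\bar\theta^{k+1}-\theta^{\star,k}_T=\frac1M\sum_i\Delta^i_T$. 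By convexity of the norm and Jensen,
\[
\mathbb{E}\|\bar\theta^{k+1}-\theta^{\star,k}_T\|\le\sqrt{\tfrac1M\sum_i\mathbb{E}\|\Delta^i_T\|^2},
\]
so it suffices to control the RMS deviation $D_t\coloneqq\big(\frac1M\sum_i\mathbb{E}\|\Delta^i_t\|^2\big)^{1/2}$.

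Take norms in the recursion and apply Minkowski's inequality in $L^2(\text{clients}\times\Omega)$:
\[
D_{t+1}\le(1+\eta L)D_t+\eta\Big(\tfrac1M\sum_i\mathbb{E}\|c^{i,k}_t\|^2\Big)^{1/2}+\eta\Big(\tfrac1M\sum_i\mathbb{E}\|\xi^{i,k}_t\|^2\Big)^{1/2}.
\]
The noise term is at most $\eta\sigma$ by (iii). The drift term is where bounded heterogeneity replaces the worst-case $2G$ bound. At any fixed $\theta$, $\frac1M\sum_i\|\nabla F_i-\nabla f\|^2=\frac1M\sum_i\|\nabla F_i\|^2-\|\nabla f\|^2\le H^2+(B^2-1)\|\nabla f\|^2\le H^2+(B^2-1)G^2$, using (ii) and (iv) together with $B\ge1$. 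The drift is, however, evaluated at the different points $\theta^{i,k}_t$. I plan to handle this by adding and subtracting quantities at the common point $\theta^{\star,k}_t$; the Lipschitz error this introduces, of order $2L\|\Delta^i_t\|$, is absorbed into the $(1+\eta L)$ factor, giving $(1+3\eta L)$. This absorption is why the step-size condition is $\eta LT\le1/6$ rather than $1/2$: it ensures $e^{3\eta LT}\le e^{1/2}\le2$. Unrolling exactly as in the SGD case of Theorem~\ref{thm:radius_local} then gives $D_T\le2\eta T\big(\sqrt{H^2+(B^2-1)G^2}+\sigma\big)$, which is the first claim. This re-centering of the drift at a common point is the main obstacle. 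If it fails, the fallback is to apply bounded heterogeneity at each $\theta^{i,k}_t$ directly and control $\|\nabla f(\theta^{i,k}_t)\|\le G$ by (iv).

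For the function-value recursion, apply PL to $T$ steps of exact gradient descent with $\eta\le1/L$. The standard descent lemma gives $f(\theta^{\star,k}_{t+1})-f^\star\le(1-\eta\mu)(f(\theta^{\star,k}_t)-f^\star)$, hence $f(\theta^{\star,k}_T)-f^\star\le(1-\eta\mu)^T(f(\bar\theta^k)-f^\star)$. Write $e\coloneqq\bar\theta^{k+1}-\theta^{\star,k}_T$. By $L$-smoothness,
\[
f(\bar\theta^{k+1})\le f(\theta^{\star,k}_T)+\|\nabla f(\theta^{\star,k}_T)\|\,\|e\|+\tfrac L2\|e\|^2\le f(\theta^{\star,k}_T)+G\|e\|+\tfrac L2\|e\|^2 .
\]
Take expectations. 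The linear term uses the first claim directly. For the quadratic term, $\mathbb{E}\|e\|^2\le D_T^2$ by Jensen, and the bound on $D_T$ derived above controls it by $4\eta^2T^2(\cdot)^2$. The constants $G$ and $L$ are folded into $c_1,c_2$, as the statement's ``absolute constants'' implicitly allows. Taking total expectation yields the recursion for $r_k$.

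The closing remarks follow by unrolling. The recursion has contraction factor $(1-\eta\mu)^T$ and an additive term; when $H=\sigma=0$ the additive term vanishes, giving linear convergence of $r_k$ to $0$. With constant $\eta$, $r_k$ converges linearly to a neighborhood whose radius is the additive term divided by $1-(1-\eta\mu)^T\approx\eta\mu T$, and this radius scales with $H$ and $\sigma$.
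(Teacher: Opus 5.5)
Your argument for the two displayed inequalities follows the paper's skeleton: an RMS deviation recursion, re-centering the drift at the common point $\theta_t^{\star,k}$ to get the factor $(1+3\eta L)$, PL contraction of the reference, and a smoothness expansion around $\theta_T^{\star,k}$. The one difference is where you put the expectation. You measure deviations in $L^2(\text{clients}\times\Omega)$, whereas the paper uses the pathwise RMS $D_t=(\frac1M\sum_i\|\Delta_t^i\|^2)^{1/2}$ and takes conditional expectations afterwards.

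This difference matters. The paper controls only $\mathbb{E}[D_T]$. It then closes the quadratic term with $\mathbb{E}[D_T^2]\le(\mathbb{E}[D_T])^2$, which is Jensen in the wrong direction. Because your $D_T$ already contains the expectation, you get legitimately
\[
\mathbb{E}\|e\|^2\le\tfrac1M\textstyle\sum_i\mathbb{E}\|\Delta_T^i\|^2=D_T^2\le 4\eta^2T^2\Gamma^2,\qquad \Gamma\coloneqq\sqrt{H^2+(B^2-1)G^2}+\sigma .
\]
So your route actually proves the $r_k$ recursion, with $c_1=2G$ and $c_2=2L$. As you note, these are not absolute constants.

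Two smaller points on this part:
\begin{itemize}
\item The re-centering step needs each $F_i$ to be $L$-smooth (Assumption~\ref{ass:smoothness}), not just $f$ as in (i).
\item Your fallback would not work. Condition (ii) bounds an average over clients at a \emph{single} point, so it says nothing about $\frac1M\sum_i\|\nabla F_i(\theta_t^{i,k})-\nabla f(\theta_t^{i,k})\|^2$ at client-specific points.
\end{itemize}

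The genuine gap is in the closing remarks. With $H=\sigma=0$ the additive terms become $c_1\eta T\sqrt{B^2-1}\,G+c_2\eta^2T^2(B^2-1)G^2$. These vanish only if $B=1$, which is exactly the case where $\frac1M\sum_i\|\nabla F_i-\nabla f\|^2\equiv0$, i.e.\ genuinely IID clients. For $B>1$, your claim that ``the additive term vanishes'' is false, and the recursion gives only convergence to a neighborhood.

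You also do not address the diminishing-step claim, and it cannot be read off this recursion. Since $1-(1-\eta\mu)^T\le\eta\mu T$, the fixed point satisfies
\[
\frac{c_1\eta T\Gamma+c_2\eta^2T^2\Gamma^2}{1-(1-\eta\mu)^T}\;\ge\;\frac{c_1\Gamma}{\mu}
\]
for every $\eta$, so the neighborhood does not shrink as $\eta\to0$. The culprit is bounding the first-order term by $G\,\mathbb{E}\|e\|$. That step discards both the zero mean of the noise ($\mathbb{E}\langle\nabla f,\xi\rangle=0$) and the cancellation $\frac1M\sum_i\big(\nabla F_i(\theta)-\nabla f(\theta)\big)=0$ of drift at a common point. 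A vanishing neighborhood needs an analysis that keeps these.

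The paper's ``follow immediately'' has the same defect. Both remarks need either an extra hypothesis ($B=1$ for the first) or a sharper argument than the displayed recursion.
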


\begin{proof}
Fix a round $k$ and omit the superscript $k$ for readability.
Let $\Delta_t^i\coloneqq \theta_t^{i}-\theta_t^\star$ and define the RMS deviation
\[
D_t \coloneqq \left(\frac1M\sum_{i=1}^M\|\Delta_t^i\|_2^2\right)^{1/2},\qquad D_0=0.
\]

\paragraph{Step 1: an RMS recursion using bounded heterogeneity.}
For SGD, $\theta_{t+1}^i=\theta_t^i-\eta(\nabla F_i(\theta_t^i)+\xi_t^i)$ and
$\theta_{t+1}^\star=\theta_t^\star-\eta\nabla f(\theta_t^\star)$, so
\[
\Delta_{t+1}^i=\Delta_t^i-\eta\big(\nabla f(\theta_t^i)-\nabla f(\theta_t^\star)\big)
-\eta\big(\nabla F_i(\theta_t^i)-\nabla f(\theta_t^i)\big)-\eta\xi_t^i.
\]
Stack the vectors across clients: $\widetilde\Delta_t\in\mathbb{R}^{Md}$ collects $(\Delta_t^1,\dots,\Delta_t^M)$, and similarly define
stacked $\widetilde A_t$ for $(\nabla f(\theta_t^i)-\nabla f(\theta_t^\star))_{i=1}^M$,
$\widetilde B_t$ for $(\nabla F_i(\theta_t^i)-\nabla f(\theta_t^i))_{i=1}^M$, and $\widetilde\xi_t$ for $(\xi_t^i)_{i=1}^M$.
Then $\widetilde\Delta_{t+1}=\widetilde\Delta_t-\eta\widetilde A_t-\eta\widetilde B_t-\eta\widetilde\xi_t$ and by the triangle inequality
in $\mathbb{R}^{Md}$,
\begin{equation}\label{eq:rms_tri}
\|\widetilde\Delta_{t+1}\|_2
\le
\|\widetilde\Delta_t-\eta\widetilde A_t\|_2 + \eta\|\widetilde B_t\|_2 + \eta\|\widetilde\xi_t\|_2.
\end{equation}
Since $f$ is $L$-smooth, $\|\nabla f(\theta_t^i)-\nabla f(\theta_t^\star)\|_2\le L\|\Delta_t^i\|_2$ for each $i$, hence
$\|\widetilde A_t\|_2\le L\|\widetilde\Delta_t\|_2$, and therefore
$\|\widetilde\Delta_t-\eta\widetilde A_t\|_2\le (1+\eta L)\|\widetilde\Delta_t\|_2$.
Dividing~\eqref{eq:rms_tri} by $\sqrt{M}$ gives
\begin{equation}\label{eq:D_basic}
D_{t+1}\le (1+\eta L)D_t + \eta\,\underbrace{\left(\frac1M\sum_{i=1}^M\|\nabla F_i(\theta_t^i)-\nabla f(\theta_t^i)\|_2^2\right)^{1/2}}_{=:B_t}
+\eta\,\underbrace{\left(\frac1M\sum_{i=1}^M\|\xi_t^i\|_2^2\right)^{1/2}}_{=:N_t}.
\end{equation}

We bound $B_t$ by relating drift at $\theta_t^i$ to drift at the common reference point $\theta_t^\star$:
\[
\nabla F_i(\theta_t^i)-\nabla f(\theta_t^i)
=
\underbrace{\nabla F_i(\theta_t^\star)-\nabla f(\theta_t^\star)}_{\delta_i(\theta_t^\star)}
+
\big(\nabla F_i(\theta_t^i)-\nabla F_i(\theta_t^\star)\big)
+
\big(\nabla f(\theta_t^\star)-\nabla f(\theta_t^i)\big).
\]
Taking RMS over $i$ and using the triangle inequality plus $L$-smoothness of $F_i$ and $f$ yields
\begin{equation}\label{eq:B_t_bound}
B_t \le \left(\frac1M\sum_{i=1}^M\|\delta_i(\theta_t^\star)\|_2^2\right)^{1/2} + 2L D_t.
\end{equation}
Now compute the RMS drift at a common point using bounded heterogeneity:
\[
\frac1M\sum_{i=1}^M\|\delta_i(\theta)\|_2^2
=
\frac1M\sum_{i=1}^M\|\nabla F_i(\theta)\|_2^2 - \|\nabla f(\theta)\|_2^2
\le
H^2 + (B^2-1)\|\nabla f(\theta)\|_2^2
\le
H^2 + (B^2-1)G^2.
\]
Therefore $\left(\frac1M\sum_i\|\delta_i(\theta_t^\star)\|_2^2\right)^{1/2}\le \sqrt{H^2+(B^2-1)G^2}$.
For the noise term, by Jensen and the conditional variance bound,
\[
\mathbb{E}[N_t\mid \mathcal{F}_{t-1}]
\le \sqrt{\mathbb{E}[N_t^2\mid\mathcal{F}_{t-1}]}
=
\sqrt{\frac1M\sum_{i=1}^M\mathbb{E}[\|\xi_t^i\|_2^2\mid\mathcal{F}_{t-1}]}
\le \sigma.
\]
Taking conditional expectations in~\eqref{eq:D_basic} and using~\eqref{eq:B_t_bound} gives
\[
\mathbb{E}[D_{t+1}\mid \mathcal{F}_{t-1}]
\le (1+\eta L)D_t + \eta\big(\sqrt{H^2+(B^2-1)G^2}+2L D_t\big) + \eta\sigma
=
(1+3\eta L)D_t + \eta\sqrt{H^2+(B^2-1)G^2} + \eta\sigma.
\]

\paragraph{Step 2: unrolling the RMS recursion.}
Iterating from $D_0=0$ yields
\[
\mathbb{E}[D_T\mid \bar\theta^k]
\le
\eta\sum_{t=0}^{T-1}(1+3\eta L)^{T-1-t}\Big(\sqrt{H^2+(B^2-1)G^2}+\sigma\Big)
\le
\eta T\,(1+3\eta L)^T\Big(\sqrt{H^2+(B^2-1)G^2}+\sigma\Big).
\]
Under $\eta LT\le \frac{1}{6}$, we have $(1+3\eta L)^T\le e^{3\eta LT}\le e^{1/2}\le 2$, hence
\[
\mathbb{E}[D_T\mid \bar\theta^k]\le 2\eta T\Big(\sqrt{H^2+(B^2-1)G^2}+\sigma\Big).
\]
Since $\bar\theta^{k+1}-\theta_T^\star = \frac1M\sum_i(\theta_T^i-\theta_T^\star)$, we have
$\|\bar\theta^{k+1}-\theta_T^\star\|_2 \le \frac1M\sum_i\|\Delta_T^i\|_2 \le D_T$.
This proves the first inequality in the corollary.

\paragraph{Step 3: transferring reference contraction to the global iterate.}
By Assumption \ref{ass:pl-condition} and Assumption \ref{ass:smoothness}, gradient descent on $f$ with step size $\eta\le 1/L$ satisfies the standard linear rate
\[
f(\theta_T^\star)-f^\star \le (1-\eta\mu)^T\big(f(\bar\theta^k)-f^\star \big).
\]
Next, by $L$-smoothness of $f$,
\[
f(\bar\theta^{k+1})
\le
f(\theta_T^\star) + \langle \nabla f(\theta_T^\star),\bar\theta^{k+1}-\theta_T^\star\rangle
+ \frac{L}{2}\|\bar\theta^{k+1}-\theta_T^\star\|_2^2.
\]
Using $\|\nabla f(\theta)\|_2\le G$ on $Q$ and $e_k\coloneqq \bar\theta^{k+1}-\theta_T^\star$,
\[
f(\bar\theta^{k+1})-f^\star
\le
f(\theta_T^\star)-f^\star + G\|e_k\|_2 + \frac{L}{2}\|e_k\|_2^2.
\]
Taking expectations and using $\|e_k\|\le D_T$ and the bound on $\mathbb{E}[D_T\mid \bar\theta^k]$ yields
\[
r_{k+1}
\le
(1-\eta\mu)^T r_k
+
G\,\mathbb{E}[D_T]
+
\frac{L}{2}\mathbb{E}[D_T^2].
\]
Finally, use Jensen to bound $\mathbb{E}[D_T^2]\le (\mathbb{E}[D_T])^2$ and substitute the bound on $\mathbb{E}[D_T]$.
This gives the claimed recursion with explicit constants (absorbed into $c_1,c_2$).
The special cases ($H=\sigma=0$ and/or diminishing stepsizes) follow immediately.
\end{proof}

\section{Details of Synthetic Loss Landscape Experiments}
\label{app:synthetic_exp}

To empirically validate our intuition that GaLore tends to find solutions with better loss landscape geometry, we designed a controlled synthetic experiment. The goal is to isolate how the \emph{dynamics} of low-rank optimizers influence their ability to escape "sharp, aligned" local minima in favor of "flat, orthogonal" global basins.

\subsection{Why Flatness implies Aggregation Robustness}
In Federated Learning, the geometric property of \enquote{flatness} is a direct proxy for aggregability. We seek a global model $\mat{W}^\star$ that remains valid even when averaged with perturbed local models. If client models drift by $\Delta \mat{W}_i$, the expected loss of the aggregate is approximately:
\begin{equation}
    \mathbb{E}[\mathcal{L}(\mat{W}_{\mathrm{agg}})] \approx \mathcal{L}(\mat{W}^\star) + \frac{1}{2} \text{Tr}\left( \nabla^2 \mathcal{L}(\mat{W}^\star) \mathbb{E}[\Delta \mat{W} \Delta \mat{W}^\top] \right).
\end{equation}
In a \textbf{sharp valley}, the Hessian $\nabla^2 \mathcal{L}$ has large eigenvalues, so even small drift incurs a massive penalty. In a \textbf{flat basin}, eigenvalues are small, minimizing the impact of heterogeneity.

\subsection{Landscape Construction: The "Kinetic Trap"}
We optimize a matrix $\mat{W} \in \mathbb{R}^{d \times d}$. The landscape combines two quadratic basins via a smooth SoftMin function:
\begin{equation}
    \mathcal{L}(\mat{W}) = -\tau \log\left( \exp(-\mathcal{L}_1(\mat{W})/\tau) + \exp(-\mathcal{L}_2(\mat{W})/\tau) \right).
\end{equation}
We define two key directions relative to the initialization $\mat{W}_0$:
\begin{itemize}
    \item $\mathbf{e}_1$ (\textbf{Aligned}): A direction lying within the initial LoRA subspace span($\mat{A}_0^\top$).
    \item $\mathbf{e}_2$ (\textbf{Orthogonal}): A direction orthogonal to the initialization.
\end{itemize}

The basins are constructed to create a specific geometric challenge:
\begin{itemize}
    \item \textbf{Basin 1 (The Target):} A \textbf{flat basin} centered at $\mat{W}_1 = c \cdot \mathbf{e}_2$. This minimum is robust (small eigenvalues in all directions) but requires significant rotation into the orthogonal direction $\mathbf{e}_2$ to be discovered.
    \item \textbf{Basin 2 (The Trap):} A \textbf{sharp valley} centered at the origin but elongated along $\mathbf{e}_1$. We set the Hessian $\mat{H}_2$ such that the valley is steep in orthogonal directions but shallow along $\mathbf{e}_1$. This creates a \enquote{trap}: LoRA's dynamics, which prioritize movement in the current subspace, perceive a quick path of descent into this brittle minimum.
\end{itemize}

\subsection{Optimization Setup}
We compare three methods, initializing each trial from a randomized starting point $\mat{W}^{(0)} = \mat{W}_{\mathrm{ref}} + \mat{\Xi}$, where $\mat{W}_{\mathrm{ref}}$ is a fixed reference point between the two basins and $\mat{\Xi}$ is Gaussian noise drawn for each trial. 

\begin{enumerate}
    \item \textbf{Full-Space SGD:} Updates $\mat{W}_{t+1} \leftarrow \mat{W}_t - \alpha \nabla \mathcal{L}$.
    \item \textbf{LoRA:} Updates $\mat{W} = \mat{W}_0 + \mat{B}\mat{A}$. Although the subspace is not strictly fixed, the coupling between $\mat{B}$ and $\mat{A}$ creates an \enquote{inertial alignment} that resists rapid rotation into $\mathbf{e}_2$.
    \item \textbf{GaLore:} Explicitly projects gradients onto a time-varying subspace $\mat{U}_r \mat{\Sigma}_r \mat{V}_r^\top$ derived from the SVD of the gradient, allowing for active subspace rotation.
\end{enumerate}

\subsection{Results}

Table~\ref{tab:synthetic_results} confirms that LoRA is attracted to the "aligned" sharp valley because its natural dynamics are insufficient to traverse the orthogonal gap. GaLore's active rotation allows it to escape the trap and converge to the flat basin, matching the robustness of Full SGD.

\begin{table}[h]
\centering
\caption{Probability of converging to the Flat Basin (Robust) vs. Sharp Valley (Brittle).}
\begin{tabular}{lccc}
\toprule
\textbf{Method} & \textbf{Basin 1 (Flat/Robust)} & \textbf{Basin 2 (Sharp/Brittle)} & \textbf{Mechanism} \\
\midrule
Full SGD & \textbf{91\%} & 9\% & Unconstrained \\
GaLore & \textbf{60\%} & 40\% & Active Rotation \\
LoRA & 20\% & \textbf{80\%} & Inertial Alignment \\
\bottomrule
\end{tabular}
\label{tab:synthetic_results}
\end{table}

\section{SVD-to-random projectors: efficiency and local convergence}
\label{app:rgalore_works}

\noindent\textbf{Motivation.}
GaLore-style methods require periodically refreshing a rank-$r$ projector. A purely data-driven refresh (SVD/RSVD every $\tau$ steps) improves early optimization but incurs nontrivial overhead, while purely random projection has lower projector-estimation overhead but may start from a poorly aligned subspace. Following prior work on unbiased/efficient GaLore variants \citep{pan2025unbiased_galore}, we adopt an \emph{SVD-to-random} schedule: 
In our implementation, we use SVD/RSVD for the first $S$ projector refreshes (set by \texttt{adaptive\_proj\_steps}) and then switch to seeded random projectors thereafter. In the random phase, the projector is fully determined by an integer seed, so we broadcast only the seed (not the basis matrix), enabling deterministic reconstruction across clients.

\noindent\textbf{Variants.}
We compare three projector schedules:
(i) \textbf{FedGaLore} (ours): RSVD/SVD in early epochs $\rightarrow$ seeded random thereafter;
(ii) \textbf{FedGaLore-SVD}: always RSVD/SVD refresh;
(iii) \textbf{FedGaLore-Pure Random}: always seeded random projector.
We also plot representative federated LoRA baselines (FFA-LoRA-AdamW, FLoRA-AdamW) for reference.

\begin{figure}[h]
    \centering
    \begin{subfigure}[t]{0.49\columnwidth}
        \centering
        \includegraphics[width=\linewidth]{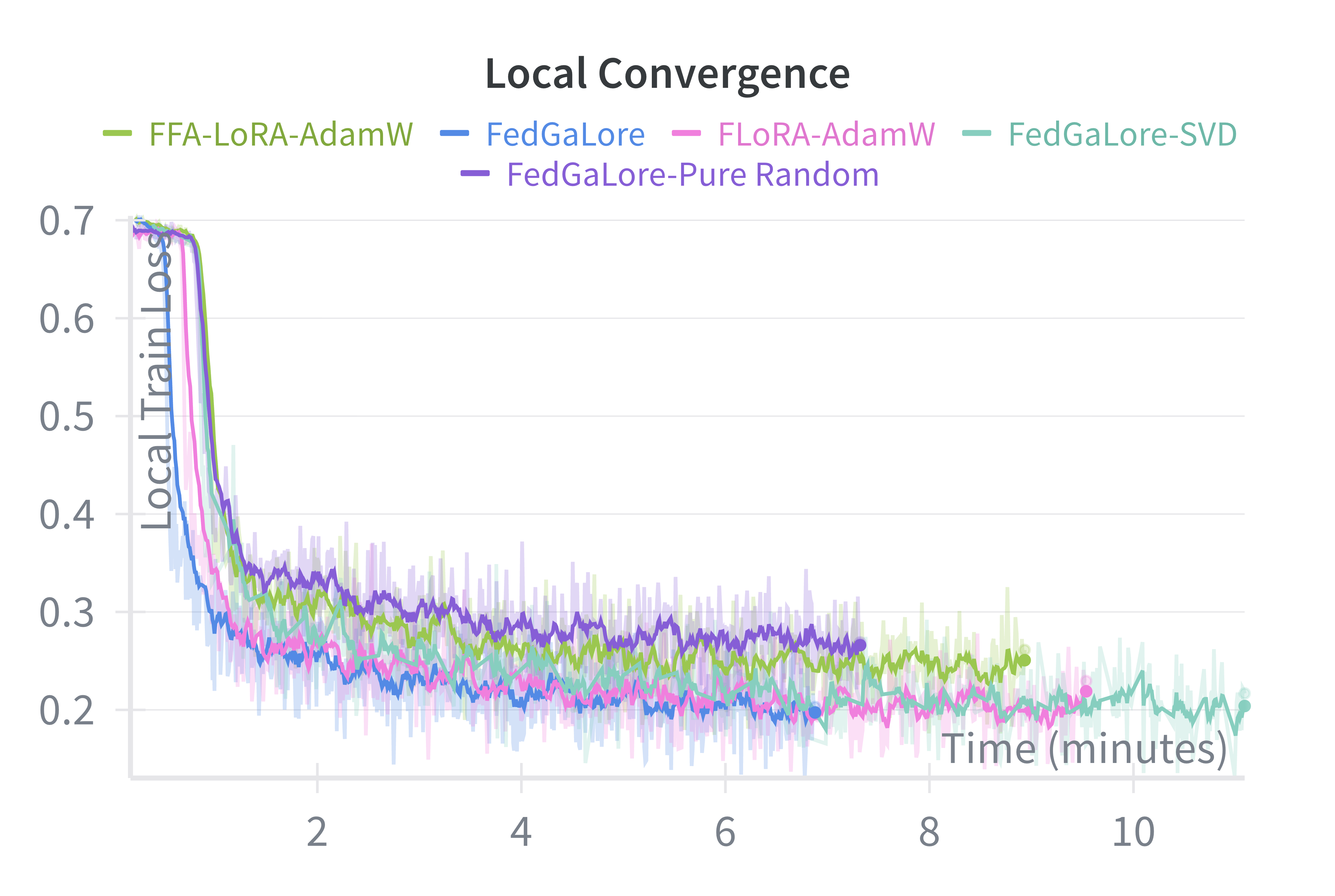}
        \caption{RoBERTa (GLUE): local training loss vs wall-clock time.}
        \label{fig:rgalore_roberta}
    \end{subfigure}
    \hfill
    \begin{subfigure}[t]{0.49\columnwidth}
        \centering
        \includegraphics[width=\linewidth]{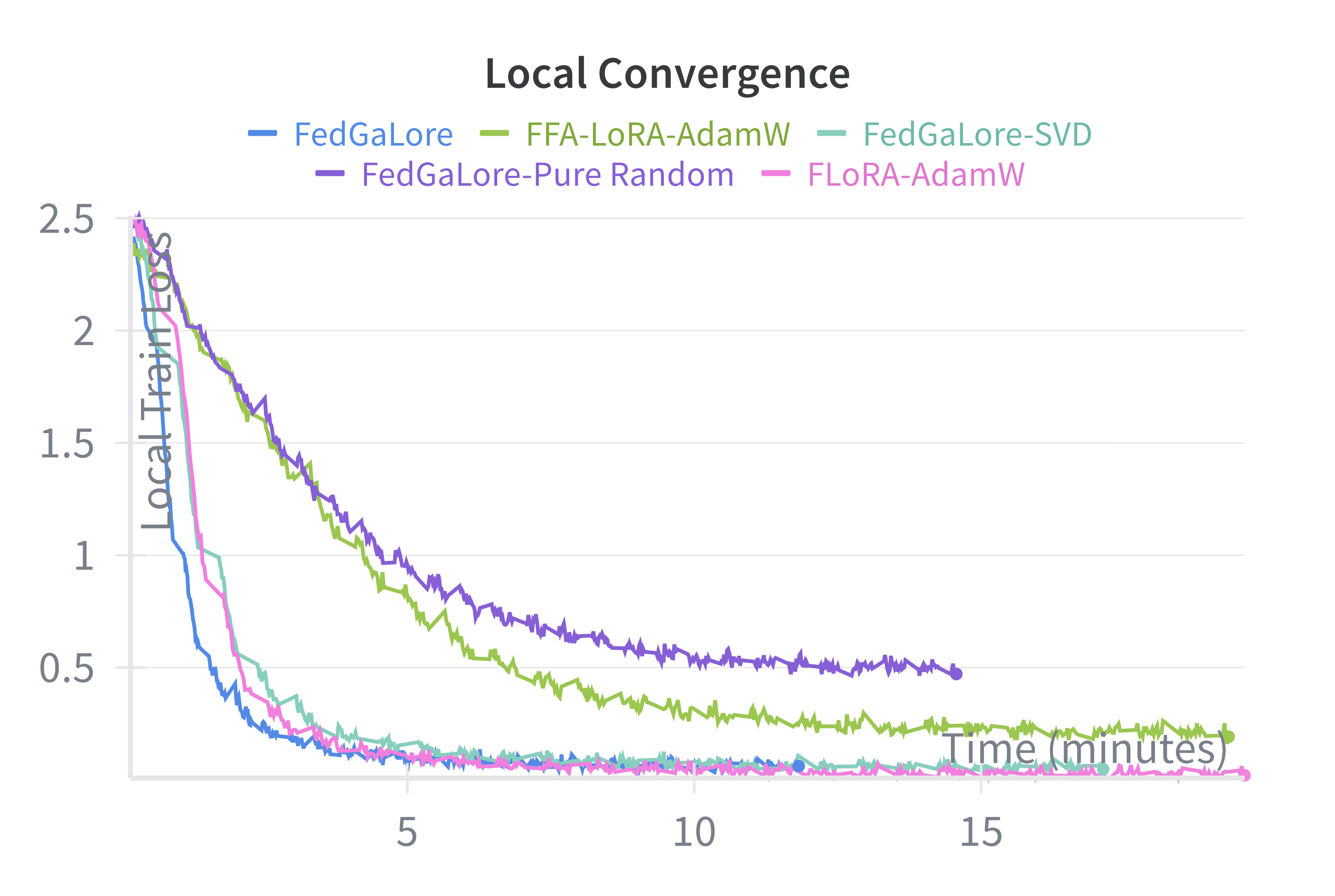}
        \caption{ViT (DomainNet): local training loss vs wall-clock time.}
        \label{fig:rgalore_vit}
    \end{subfigure}
    \vspace{-4pt}
    \caption{\textbf{Local convergence under different projector schedules.}
FedGaLore (SVD$\rightarrow$random) achieves the best time-to-loss compared to always-SVD and always-random schedules.}
    \label{fig:rgalore_local_convergence}
    \vspace{-6pt}
\end{figure}

\noindent\textbf{Results and discussion (time-to-loss, same steps).}
Figure~\ref{fig:rgalore_local_convergence} reports \emph{local training loss versus wall-clock time} under a fixed local training budget: all variants run the same number of local steps with identical hyperparameters (rank, $\tau$, learning rate, batch size, etc.). Differences therefore reflect \emph{time-per-step} (including projector refresh/reprojection overhead) and optimization efficiency per step, rather than additional steps.
Across both RoBERTa and ViT, \textbf{FedGaLore} (SVD$\rightarrow$random) achieves the best time-to-loss. Compared to \textbf{FedGaLore-SVD}, this is expected since repeatedly refreshing RSVD/SVD bases and reprojecting buffers incurs nontrivial overhead. \textbf{FedGaLore-Pure Random} is consistently worse in time-to-loss, indicating that fully random projectors can slow local optimization in practice. While the exact wall-clock advantage over pure random can depend on implementation details (e.g., orthonormalization kernels, caching, and refresh schedules), the empirical takeaway is clear: a short data-driven warm start followed by seeded random projectors yields the most favorable time-to-loss.

\noindent\textbf{Comparison to federated LoRA baselines.}
Figure~\ref{fig:rgalore_local_convergence} also includes representative LoRA baselines under the same local-step budget. We observe that LoRA with AdamW can achieve fast local loss reduction, consistent with the benefit of local adaptivity. In contrast, FFA-LoRA is typically slower locally in these settings which is consistent with its more restrictive update parameterization that trades expressivity for stability under heterogeneity. Notably, these local curves do not capture global robustness: our main results show that methods with strong local progress can still suffer from optimizer-state and update-space mismatch under non-IID aggregation, motivating \methodname{}'s server-side state synchronization.

\newpage
\section{Angle-based Joint and Individual Variation Explained (AJIVE)}\label{appendix:ajive_intro}

\begin{table}[!h]
\centering
\begin{threeparttable}
\caption{\textbf{AJIVE scalability microbenchmark (server-side).}
We measure AJIVE latency as a function of the number of views (participating clients per round, views $=|\mathcal{P}_k|$) and matrix size for dense square inputs $v\in\mathbb{R}^{n\times n}$ (avg over trials). GPU uses an A100 with randomized SVD. FLOPs are estimates.}
\label{tab:ajive_scaling_1024}
\begin{tabular}{lrrrr}
\toprule
Device & views ($|\mathcal{P}_k|$) & $n$ & Average time (s) & Average FLOPs (est.) \\
\midrule
CPU & 1  & 512  & 0.006554 & 4.375e+06 \\
CPU & 1  & 768  & 0.009816 & 9.708e+06 \\
CPU & 1  & 1024 & 0.018628 & 1.714e+07 \\
CPU & 2  & 512  & 0.012233 & 8.849e+06 \\
CPU & 2  & 768  & 0.025425 & 1.956e+07 \\
CPU & 2  & 1024 & 0.040607 & 3.447e+07 \\
CPU & 5  & 512  & 0.031390 & 2.193e+07 \\
CPU & 5  & 768  & 0.045956 & 4.862e+07 \\
CPU & 5  & 1024 & 0.092544 & 8.579e+07 \\
CPU & 10 & 512  & 0.060522 & 4.372e+07 \\
CPU & 10 & 768  & 0.098389 & 9.704e+07 \\
CPU & 10 & 1024 & 0.184439 & 1.713e+08 \\
\midrule
GPU (A100) & 1  & 512  & 0.031459 & 4.375e+06 \\
GPU (A100) & 1  & 768  & 0.004896 & 9.708e+06 \\
GPU (A100) & 1  & 1024 & 0.004826 & 1.714e+07 \\
GPU (A100)\tnote{1} & 2  & 512  & 0.858201 & 8.849e+06 \\
GPU (A100) & 2  & 768  & 0.009561 & 1.956e+07 \\
GPU (A100) & 2  & 1024 & 0.009093 & 3.447e+07 \\
GPU (A100) & 5  & 512  & 0.867758 & 2.193e+07 \\
GPU (A100) & 5  & 768  & 0.023258 & 4.862e+07 \\
GPU (A100) & 5  & 1024 & 0.022056 & 8.579e+07 \\
GPU (A100) & 10 & 512  & 0.747936 & 4.372e+07 \\
GPU (A100) & 10 & 768  & 0.046743 & 9.704e+07 \\
GPU (A100) & 10 & 1024 & 0.045271 & 1.713e+08 \\
\bottomrule
\end{tabular}
\begin{tablenotes}
    \item[1] GPU timings include synchronization and may exhibit variance due to kernel launch/allocator effects. This may lead to abnormal execution time. 
\end{tablenotes}
\end{threeparttable}
\end{table}

We provide a brief introduction to the Joint and Individual Variation Explained (JIVE) framework and the Angle-based JIVE (AJIVE) method. The decomposition of multi-view data into shared and specific components is a fundamental challenge that has sparked diverse applications, such as identifying communities within heterogeneous networks \citep{macdonald2022latent-ajive}, analyzing high-dimensional genomic data \citep{lock2013joint-ajive}, and discerning global and local features in federated learning \citep{shi2024personalized-ajive}.

A significant challenge in these analyses lies in separating the joint and individual variations present in multi-view datasets. In their seminal work, \citet{lock2013joint-ajive} pioneered the JIVE model to address this decomposition. To overcome computational bottlenecks and identifiability issues inherent in the original formulation, \citet{feng2018ajive} proposed AJIVE, a two-stage spectral method based on principal angle analysis. Recent theoretical work by \citet{yang2025ajive-theorem} establishes that AJIVE achieves optimal performance in high-signal-to-noise ratio (SNR) settings, confirming its efficiency while outlining specific limitations; we refer interested readers to this work for details.

Our implementation follows the logic of the MVLearn library \citep{perry2021mvlearn}. For computational efficiency, we have reimplemented the method using PyTorch \citep{pytorch} to leverage GPU acceleration. Algorithm \ref{alg:ajive} details the AJIVE procedure.

\begin{algorithm}[!h]
\caption{Angle-based Joint and Individual Variation Explained (AJIVE)}
\label{alg:ajive}
\begin{algorithmic}[1]
\STATE \textbf{Input:} Data matrices $\{\mat{X}^{(1)}, \dots, \mat{X}^{(k)}\}$, parameters for rank estimation
\STATE \textbf{Output:} Joint matrices $\mat{J}^{(i)}$, Individual matrices $\mat{I}^{(i)}$, Noise matrices $\mat{E}^{(i)}$

\STATE \textit{// Phase 1: Signal Space Initial Extraction}
\FOR{$i = 1$ to $k$}
    \STATE Center $\mat{X}^{(i)}$ by subtracting column means
    \STATE Estimate initial signal rank $r_{\text{init}}^{(i)}$
    \STATE Compute SVD: $\mat{U}^{(i)}, \mat{D}^{(i)}, \mat{V}^{(i)} \gets \text{SVD}(\mat{X}^{(i)}, \text{rank}=r_{\text{init}}^{(i)})$
    \STATE Calculate SV threshold $\tau^{(i)}$ (e.g., midpoint of $r$ and $r+1$ SVs)
\ENDFOR

\STATE \textit{// Phase 2: Score Space Segmentation}
\STATE Concatenate signal scores: $\mat{M} \gets [\mat{U}^{(1)}, \mat{U}^{(2)}, \dots, \mat{U}^{(k)}]$
\STATE Compute Joint SVD: $\mat{U}_{\text{joint}}, \mat{D}_{\text{joint}}, \mat{V}_{\text{joint}} \gets \text{SVD}(\mat{M})$

\IF{Joint Rank $r_{\text{joint}}$ is not provided}
    \STATE Compute Wedin bound $\delta_{\text{wedin}}$ and Random bound $\delta_{\text{rand}}$ via resampling
    \STATE Set cutoff $\delta_{\text{cutoff}} \gets \max(\delta_{\text{wedin}}, \delta_{\text{rand}})$
    \STATE $r_{\text{joint}} \gets \text{count}(\mat{D}_{\text{joint}}^2 > \delta_{\text{cutoff}})$
\ENDIF

\STATE Extract basis: $\mat{U}_{\text{joint}} \gets \mat{U}_{\text{joint}}[:, 1:r_{\text{joint}}]$
\STATE Remove columns from $\mat{U}_{\text{joint}}$ if projection norm on $\mat{X}^{(i)} < \tau^{(i)}$ (Identifiability Check)

\STATE \textit{// Phase 3: Final Decomposition}
\FOR{$i = 1$ to $k$}
    \STATE \textbf{Joint Estimate:} $\mat{J}^{(i)} \gets \mat{U}_{\text{joint}}\mat{U}_{\text{joint}}^T \mat{X}^{(i)}$
    
    \STATE \textbf{Individual Estimate:}
    \STATE Compute residual: $\mat{X}^{(i)}_{\text{orth}} \gets \mat{X}^{(i)} - \mat{J}^{(i)}$
    \STATE Compute SVD: $\mat{U}_{\text{ind}}, \mat{D}_{\text{ind}}, \mat{V}_{\text{ind}} \gets \text{SVD}(\mat{X}^{(i)}_{\text{orth}})$
    \STATE Determine rank $r_{\text{ind}}^{(i)}$ via threshold $\tau^{(i)}$
    \STATE $\mat{I}^{(i)} \gets \mat{U}_{\text{ind}}[:, :r_{\text{ind}}^{(i)}] \cdot \mat{D}_{\text{ind}}[:r_{\text{ind}}^{(i)}] \cdot \mat{V}_{\text{ind}}[:, :r_{\text{ind}}^{(i)}]^T$
    
    \STATE \textbf{Noise Estimate:} $\mat{E}^{(i)} \gets \mat{X}^{(i)} - \mat{J}^{(i)} - \mat{I}^{(i)}$
\ENDFOR

\STATE \textbf{Return} $\{\mat{J}^{(i)}, \mat{I}^{(i)}, \mat{E}^{(i)}\}_{i=1}^k$
\end{algorithmic}
\end{algorithm}

\newpage
\section{Validation of Spectral Synchronization via AJIVE}
\label{appendix:ajive}
\begin{figure}[!h]
    \centering
    \includegraphics[width=0.5\linewidth]{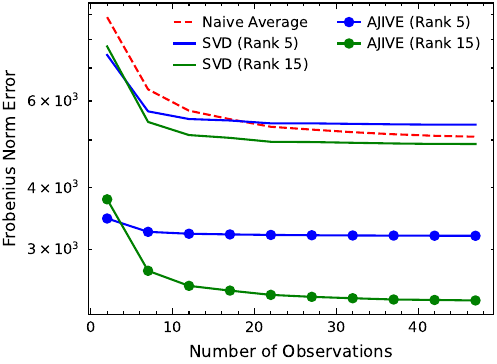} 
    \caption{\textbf{AJIVE recovers global geometry under structured drift.} We compare the reconstruction error of the global second moment $V^\star$ ($y$-axis) vs. number of clients ($x$-axis). \textbf{Naive Averaging (Red)} fails to converge to the true signal because the squared drift terms $\mathbb{E}[\mat{L}_k^2]$ introduce a persistent bias. \textbf{SVD on Average (Solid lines)} helps slightly but operates on already-corrupted data. \textbf{AJIVE (Markers)} successfully isolates the shared signal structure. Notably, AJIVE Rank-15 (Green circles) achieves the lowest error, correctly capturing the rank expansion induced by the element-wise square operation.}
    \label{fig:ajive_validation}
\end{figure}

To justify our design choice of using AJIVE for synchronizing the second-moment estimator $v$, we conduct a controlled synthetic experiment. The core challenge in federated adaptive optimization is that the second moment is a non-linear transformation of the gradients ($v \approx g^2$). Even if client gradients share a low-rank subspace, the client drift term $c_i$ introduces structured noise that does not vanish under simple averaging due to the quadratic nature of the update.

\subsection{Experimental Setup}
We simulate a scenario where the true global gradient signal lies in a low-rank subspace, but clients observe a drifted, noisy version. We aim to recover the \enquote{true} global second moment $\mat{V}^\star$.

\textbf{Data Generation.}
Let the global gradient signal be $\mat{G}^\star \in \mathbb{R}^{N \times M}$ with rank $r=5$. The ground truth second moment is $\mat{V}^\star = (\mat{G}^\star)^{\odot 2}$ (element-wise square). Note that while $
\mat{G}^\star$ is rank-5, its element-wise square $\mat{V}^\star$ typically has a higher rank (approximated by $r(r+1)/2 \approx 15$).
We simulate $K$ clients. Client $k$ observes a perturbed gradient:
\begin{equation}
    \mat{G}_k = \mat{G}^\star + \underbrace{\mat{L}_k}_{\text{Drift}} + \underbrace{\mat{\Xi}_k}_{\text{Noise}},
\end{equation}
where $\mat{L}_k$ is a random rank-2 perturbation representing client-specific drift, and $\mat{\Xi}_k \sim \mathcal{N}(0, \sigma^2)$ is unstructured noise. The client computes its local second moment $\mat{V}_k = (\mat{G}_k)^{\odot 2}$.

\textbf{Baselines.}
We compare three aggregation strategies to estimate $\mat{V}^\star$ given $\{V_k\}_{k=1}^K$:
\begin{enumerate}
    \item \textbf{Naive Averaging:} $\bar{\mat{V}} = \frac{1}{K} \sum_k \mat{V}_k$. This is the standard FedOpt approach.
    \item \textbf{Averaging + SVD:} We compute the naive average $\bar{\mat{V}}$ and then project it to rank $r'$ via SVD. This attempts to denoise the result \emph{after} aggregation.
    \item \textbf{AJIVE (Ours):} We apply AJIVE to the collection $\{\mat{V}_k\}_{k=1}^K$ to extract the joint spectral component shared across clients, effectively filtering out the drift $\mat{L}_k$ before it corrupts the signal.
\end{enumerate}

\subsection{Results and Analysis}
We measure the reconstruction error $\| \mat{V}_{\text{est}} - \mat{V}^\star \|_F$ as a function of the number of participating clients. The results are visualized in Figure~\ref{fig:ajive_validation}.

Our observations include: 

\begin{itemize}
    \item \textbf{Failure of Naive Averaging}: As shown by the red dashed line, simply averaging local states results in high error. This confirms our theoretical concern: non-linear drift does not cancel out linearly.
    \item \textbf{Rank Expansion}: The element-wise square of a rank-$r$ matrix has an effective rank much larger than $r$. Consequently, estimators restricted to the original rank (Rank 5, blue lines) underfit the geometry. AJIVE Rank 15 (green circles) matches the theoretical rank of the squared signal, achieving near-perfect recovery as the number of clients increases.
    \item \textbf{Robustness}: AJIVE outperforms post-hoc SVD (solid green line) because it uses the \emph{consensus} of subspaces across clients to identify the signal, whereas post-hoc SVD cannot distinguish between the signal and the dominant directions of the drift variance.
\end{itemize}

\newpage

\section{Details of Hyper Parameters}\label{appendix:hyperparameters}

We implement all methods in PyTorch and use Ray to simulate distributed clients.
For GLUE and DomainNet, we use $E{=}2$ local epochs per round and $R{=}50$ communication rounds; for Llama-2-7B we use $E{=}1$ and $R{=}20$ due to higher per-client cost. Additional details follow the pipeline configs in \texttt{conf/pipeline/*.yaml}.

\begin{table}[!h]
\centering
\caption{\textbf{Federated protocol and core training hyperparameters.} GLUE/DomainNet use $E{=}2$ and $R{=}50$; Llama uses $E{=}1$ and $R{=}20$. All runs use bf16.}
\label{tab:hparams_train}
\footnotesize
\setlength{\tabcolsep}{3pt}
\renewcommand{\arraystretch}{1.10}
\resizebox{\columnwidth}{!}{%
\begin{tabular}{lcccccccc}
\toprule
\textbf{Domain} & \textbf{Model} & \textbf{LR} & \textbf{BS (tr/ev)} & \textbf{GA} & \textbf{Warmup} & \textbf{WD} & \textbf{Sched} & \textbf{Len} \\
\midrule
GLUE & RoBERTa-base \citep{liu2019roberta} & $5\!\times\!10^{-5}$ & 32/32 & -- & 0.06 & 0.01 & cosine & -- \\
DomainNet & ViT-base \citep{dosovitskiy2020vit} & $2\!\times\!10^{-4}$ & 128/128 & -- & 0.05 & 0.1 & cosine & -- \\
MetaMathQA & Llama-2-7B \citep{touvron2023llama} & $1\!\times\!10^{-4}$ & 2/2 & 32 & 0.03 & 0.01 & cosine & 1024 \\
\bottomrule
\end{tabular}%
}
\vspace{-2pt}
\end{table}

\begin{table}[!h]
\centering
\caption{\textbf{LoRA configuration and target modules.}}
\label{tab:hparams_lora}
\footnotesize
\setlength{\tabcolsep}{3pt}
\renewcommand{\arraystretch}{1.10}
\resizebox{\columnwidth}{!}{%
\begin{tabular}{lcccll}
\toprule
\textbf{Domain} & $r$ & $\alpha$ & \textbf{Task} & \textbf{Modules to save} & \textbf{Target modules} \\
\midrule
GLUE & 8 & 16 & SEQ\_CLS & \texttt{["classifier"]} & \texttt{["query","value","dense"]} \\
DomainNet & 16 & 32 & SEQ\_CLS & \texttt{["classifier"]} & \texttt{["query","value","dense"]} \\
MetaMathQA & 32 & 64 & CAUSAL\_LM & \texttt{null} & \texttt{["q\_proj","k\_proj","v\_proj","o\_proj","gate\_proj","up\_proj","down\_proj"]} \\
\bottomrule
\end{tabular}%
}
\end{table}

\begin{table}[!h]
\centering
\caption{\textbf{Compute allocation.} GPUs per run in our Ray-based client simulation.}
\label{tab:hparams_resource}
\footnotesize
\setlength{\tabcolsep}{6pt}
\renewcommand{\arraystretch}{1.10}
\begin{tabular}{lc}
\toprule
\textbf{Domain} & \textbf{\#GPUs} \\
\midrule
GLUE & 0.5 \\
DomainNet & 0.5 \\
MetaMathQA (Llama-2-7B) & 1 \\
\bottomrule
\end{tabular}
\end{table}

\section{Dirichlet Distribution for Modeling Non-IID Data}
\label{appendix:lda_non_iid}

\begin{figure}[!h]
    \centering
    \begin{subfigure}[b]{0.32\textwidth}
        \centering
        \includegraphics[width=\textwidth]{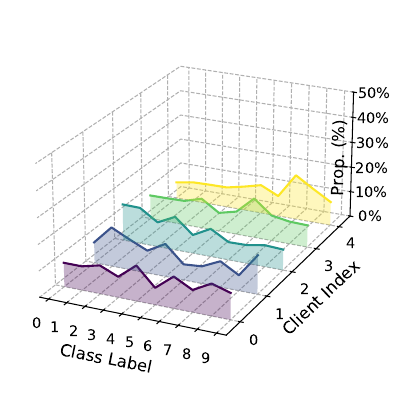} 
        \caption{$\alpha = 10$}
        \label{fig:alpha_0.1}
    \end{subfigure}
    \hfill
    \begin{subfigure}[b]{0.32\textwidth}
        \centering
        \includegraphics[width=\textwidth]{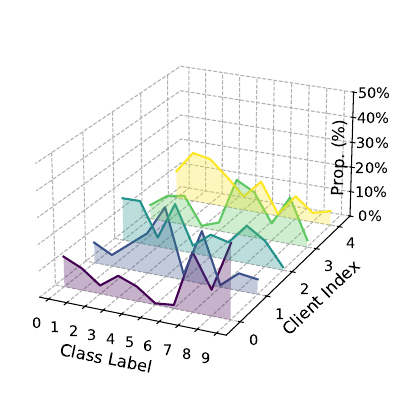}
        \caption{$\alpha = 1$}
        \label{fig:alpha_0.5}
    \end{subfigure}
    \hfill
    \begin{subfigure}[b]{0.32\textwidth}
        \centering
        \includegraphics[width=\textwidth]{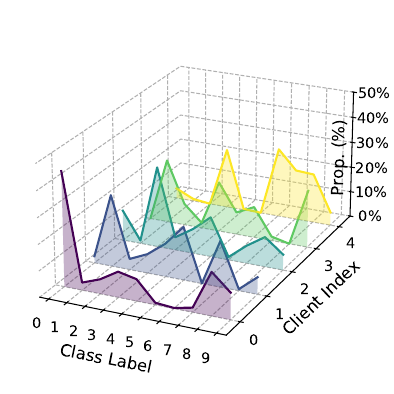}
        \caption{$\alpha = 0.5$}
        \label{fig:alpha_10}
    \end{subfigure}
    \caption{\textbf{Impact of concentration parameter $\alpha$ on data distribution.} We illustrate the class distribution across clients for varying $\alpha$ on MNIST datasets. Smaller values (e.g., $\alpha=0.5$) result in severe class imbalance, where clients possess samples from very few classes.}
    \label{fig:placeholder}
\end{figure}

Following common practice in federated learning \citep{yurochkin2019lda, wang2020federated_lda, li2020practical_lda}, we model client data heterogeneity using a Dirichlet distribution. Specifically, for a classification task with $K$ classes, the label distribution for each client $i$ is sampled from a Dirichlet distribution:  
\[
p_i \sim \text{Dir}(\alpha \cdot \mathbf{1}_K),
\]  
where $\alpha > 0$ is the concentration parameter and $\mathbf{1}_K$ is a $K$-dimensional vector of ones. A smaller $\alpha$ produces more skewed client label distributions (i.e., stronger non-IID conditions), while a larger $\alpha$ yields more balanced distributions approaching the IID case.  

After sampling $p_i$, we partition the dataset by allocating examples to client $i$ according to $p_i$. Figure~\ref{fig:placeholder} illustrates how the Dirichlet partition leads to distinct label distributions across clients. For MetaMathQA\citep{yu2023metamath}, we treat \enquote{type} as the label. 

In our experiments, we set $\alpha=0.5$ to study the impact of severe data heterogeneity.

\section{More Results of GLUE}\label{app:results_of_glue}

\begin{table*}[!h]
\caption{GLUE Benchmark Results. \ding{51} denotes IID. \ding{53} denotes Non-IID, $\Delta$ indicates the difference.}
\label{tab:glue2}
\begin{tabularx}{\textwidth}{@{}l *{4}{XXX}@{}}
\toprule
\textbf{Method} & 
\multicolumn{3}{c}{\textbf{MNLI} Acc\%} & 
\multicolumn{3}{c}{\textbf{QNLI} Acc\%} & 
\multicolumn{3}{c}{\textbf{RTE} Acc\%} & 
\multicolumn{3}{c}{\textbf{STS-B} MSE} \\
\cmidrule(lr){2-4} \cmidrule(lr){5-7} \cmidrule(lr){8-10} \cmidrule(lr){11-13}
 & {\ding{51}} & {\ding{53}} & {$\Delta$} 
 & {\ding{51}} & {\ding{53}} & {$\Delta$} 
 & {\ding{51}} & {\ding{53}} & {$\Delta$} 
 & {\ding{51}} & {\ding{53}} & {$\Delta$} \\
\midrule
FedAvg-Full  & 82.5 & 81.1 & $\downarrow$1.4 & 90.1 & 89.8 & $\downarrow$0.3 & 75.1 & 68.2 & $\downarrow$6.9 & 0.45 & 0.67 & $\downarrow$.22 \\

\midrule

FedIT  & 75.6 & 69.8 & $\downarrow$5.8 & 83.4 & 78.1 & $\downarrow$5.3 & 66.5 & 58.9 & $\downarrow$7.6 & 0.38 & 0.55 & $\uparrow$.17 \\
FFA-LoRA  & 82.2 & 75.8 & $\downarrow$6.4 & 89.9 & 83.4 & $\downarrow$6.5 & 74.8 & 65.3 & $\downarrow$9.5 & 0.44 & 0.56 & $\uparrow$.16\\
FLoRA  & \textbf{82.7} & 73.2 & $\downarrow$9.5 & 92.0 & 86.7 & $\downarrow$5.3 & 75.1 & \textbf{68.2} & $\downarrow$6.9 & 0.39 & 0.53 & $\uparrow$.14 \\
LoRA-Fair  & 81.3 & 80.0 & $\downarrow$1.3 & 83.1 & 81.8 & $\downarrow$1.3 & 71.3 & 63.2 & $\downarrow$8.1 & 0.57 &  0.61 &  \textbf{$\uparrow$.04} \\
FR-LoRA  & 80.9  & 72.3 & $\downarrow$8.6 & 93.1 & 90.8 & $\downarrow2.3$ & \textbf{76.2} & 70.4 & $\downarrow$5.8 & \textbf{0.33} & \textbf{0.39} & $\uparrow$.06 \\

\midrule
\methodname{}$^-$ & 81.2 & 77.4 & $\downarrow$3.8 & \textbf{93.3} & 88.1 & $\downarrow$5.2 & \textbf{76.2} & 68.0 & $\downarrow$8.2 & \textbf{0.33} & \textbf{0.51} & $\uparrow$.18 \\
\methodname{}  & \textbf{82.7} & \textbf{81.4} & \textbf{$\downarrow$1.3} & 90.3 & \textbf{89.8} & \textbf{$\downarrow$0.5} & 75.2 & \textbf{68.2} & \textbf{$\downarrow$7.0} & 0.45 & 0.67 & $\uparrow$.22 \\
\bottomrule
\end{tabularx}
\end{table*}


\end{document}